\documentclass[11pt]{article}

\usepackage[preprint]{acl}

\usepackage{times}
\usepackage{latexsym}
\usepackage[T1]{fontenc}
\usepackage[utf8]{inputenc}
\usepackage{microtype}
\usepackage{inconsolata}
\usepackage{graphicx}
\usepackage{subfigure}
\usepackage{enumitem}

\usepackage{amsmath}
\usepackage{amssymb}
\usepackage{amsthm}
\newtheorem{proposition}{Proposition}
\newtheorem{lemma}{Lemma}
\usepackage{booktabs}
\usepackage{multirow}
\usepackage{colortbl}
\usepackage[dvipsnames]{xcolor}
\usepackage{tabularx}
\usepackage{float}
\usepackage[most]{tcolorbox}
\usepackage{pgfplots}
\pgfplotsset{compat=1.18}

\newtcolorbox[auto counter]{dialogbox}[2][]{
enhanced, breakable,
colback=gray!5, colframe=black!40,
fonttitle=\small\bfseries,
title={#2},
left=6pt, right=6pt, top=4pt, bottom=4pt,
#1
}

\title{\textsc{Hyper-ES}: Effective Evolution Strategies for LLM Reasoning\\via Descent Direction Merging}
\author{
Yu Gu$^{3,1,*,\ddagger}$ \quad
Zhi Zheng$^{2,*,\dagger}$ \quad
Yunpeng Ba$^{1}$ \quad
Xialiang Tong$^{4}$ \quad
Mingxuan Yuan$^{4}$ \quad
Zhenkun Wang$^{1,\dagger}$ \\
$^{1}$School of Automation and Intelligent Manufacturing, 
Southern University of Science and Technology, China \\
$^{2}$School of Computing, National University of Singapore, Singapore \\
$^{3}$School of Intelligence Science and Technology, 
Nanjing University, China \\
$^{4}$Noah's Ark Lab, Huawei Technologies Ltd., China\\
$^{\ddagger}$Work done while interning at Southern University of Science and Technology.\\
$^{*}$Equal contribution. \quad $^{\dagger}$Co-corresponding authors.
}

\begin{document}
\maketitle

\begin{abstract}
Evolution Strategy (ES) is a promising alternative to gradient-based fine-tuning for resource-constrained Large Language Model (LLM) reasoning. However, directly applying ES to billion-parameter LLMs is \textbf{highly ineffective}. In such high-dimensional parameter spaces, most random perturbations are nearly orthogonal to useful update directions, leading to unstable optimization.
We propose \textsc{Hyper-ES}, a subspace-based ES framework that avoids the weakness of ES in full-parameter search while \textbf{exploiting its strength in low-dimensional optimization}. Instead of asking ES to discover useful directions from random perturbations in the LLM parameter space, \textsc{Hyper-ES} first performs a small number of inexpensive gradient-based fine-tuning runs to obtain descent directions. Although each direction may provide only a limited improvement on its own, their span forms a compact adaptation subspace that captures useful reasoning updates. \textsc{Hyper-ES} then applies CMA-ES to optimize layer-wise DARE--TIES merging coefficients within this subspace, allowing ES to search over combinations of meaningful descent directions rather than over arbitrary full-model perturbations.
We evaluate \textsc{Hyper-ES} on three Qwen2.5-Instruct and DeepSeek-R1-Distill backbones across six mathematical reasoning datasets. Results show that \textsc{Hyper-ES} consistently outperforms GRPO-LoRA by 1\% while requiring 10\% fewer space-consuming gradient updates.\footnote{Code at \url{https://github.com/kuangrepi/Hyper-ES}.}
\end{abstract}

\section{Introduction}

Large Language Models (LLMs) can demonstrate incredible capabilities in mathematical reasoning \cite{li2025system}, especially when equipped with the chain-of-thought (CoT) prompting techniques \cite{wei2022chain} and further refined with gradient-based supervised fine-tuning (SFT) \cite{zheng2025reasoning} or reinforcement learning (RL) methods \cite{shao2024deepseekmath}. However, although these methods achieve significant improvements in reasoning ability, the gradient backpropagation process within them leads to significantly higher time and memory consumption \cite{liu2025efficient}, which hinders the fine-tuning of LLMs for reasoning in resource-constrained scenarios \cite{park2025mobilerag}.

\begin{figure}[H]\vspace{-5pt}
    \centering
    \subfigure[Gradient-based GRPO for LLM Reasoning]{\includegraphics[width = 0.5\textwidth]{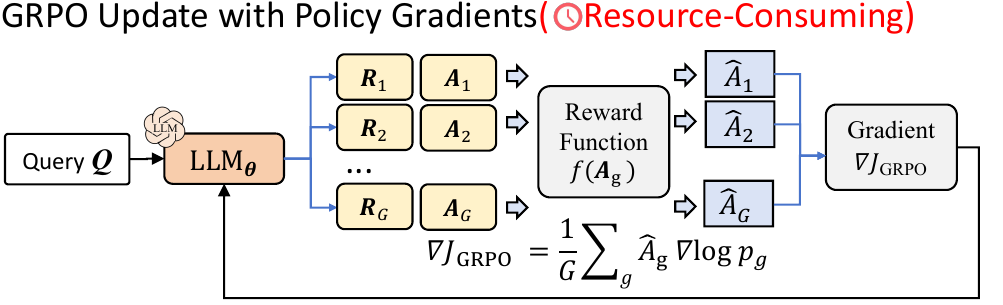}}\vspace{-2pt}
    \subfigure[Gradient-free ES for LLM Reasoning]{\includegraphics[width = 0.5\textwidth]{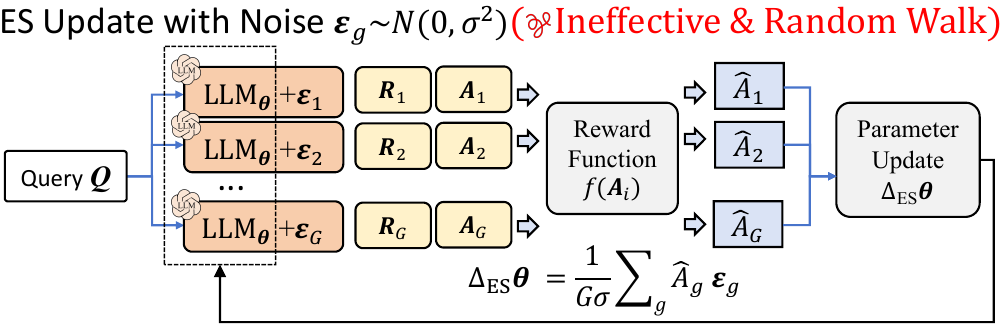}}\vspace{-2pt}
    \subfigure[Hyper-ES (Ours) for ES over Merging Descent Directions]{\includegraphics[width = 0.5\textwidth]{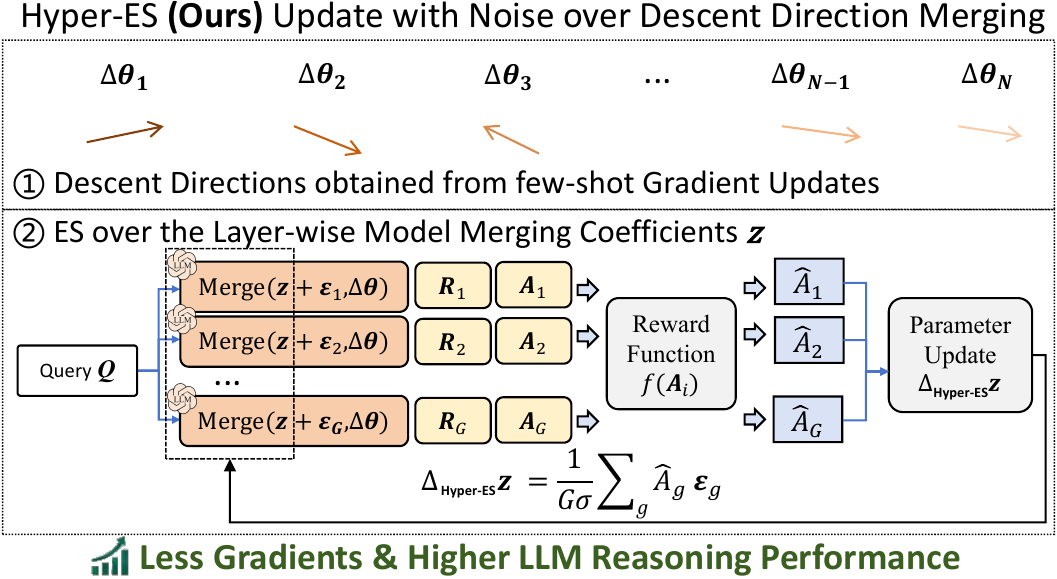}}\vspace{-5pt}
    \caption{(a) GRPO updates model parameters with policy gradients, which is usually resource-consuming. (b) ES offers a gradient-free method for LLM reasoning, suitable for resource-constrained scenarios, but will lead to significant inefficiency \& out-of-control random walk. (c) Hyper-ES replaces full-parameter exploration with low-dimensional search over coefficients of fast-obtained descent directions, leading to Higher LLM Reasoning Performance with Fewer Gradient Updates.}\label{fig:figure1}
\end{figure}\vspace{-5pt}

Recent Evolution Strategy (ES) methods have emerged as a promising alternative to gradient-based fine-tuning for improving LLM reasoning ability~\cite{qiu2025evolution,sarkar2025evolution,sun2026essam}. Unlike gradient-based methods (e.g., Group Relative Policy Optimization (GRPO), shown in Figure \ref{fig:figure1} (a)) that backpropagate through the model, as shown in Figure \ref{fig:figure1} (b), ES only perturbs parameters, evaluates each candidate with a verifiable reward, and updates the parameters toward better-performing regions, demonstrating \textbf{10$\times$ space-efficiency}~\citep{sun2026essam}. However, its direct application to LLMs is severely limited by the dimensionality. In billion-parameter spaces, almost all randomly sampled perturbations lie in directions that are irrelevant to task improvement. With a limited population size, ES therefore receives extremely weak directional signals, making full-parameter search sample-inefficient and unstable~\cite{hoy2026matching,abdi2026evolutionary}.

To keep the efficiency of ES while avoiding its drawback in high-dimensional direction discovery, we decouple the exploration of update directions from derivative-free optimization. Rather than asking ES to find useful descent directions through random perturbations in the full LLM parameter space, \textsc{Hyper-ES} first constructs $N$ few-shot LoRA directions from a shared base model by running fewer than ten GRPO update steps on different data subsets. Although these preliminary updates are cheap and individually limited, their LoRA deltas provide task-relevant descent directions that better align with reasoning improvement than isotropic random noise. \textsc{Hyper-ES} then treats these deltas as basis directions and applies the Covariance Matrix Adaptation Evolution Strategy (CMA-ES) \cite{hansen2001cmaes} to optimize layer-wise DARE--TIES merging coefficients over their combinations. \textsc{Hyper-ES} reformulates full-parameter adaptation as a low-dimensional, structured coefficient search with only hundreds of parameters, allowing ES to preserve its memory-light and parallelizable nature while operating in the regime where it is most effective.
Empirically, we implement \textsc{Hyper-ES} on Qwen2.5-0.5B-Instruct, Qwen2.5-1.5B-Instruct, and DeepSeek-R1-Distill-1.5B. Across four arithmetic reasoning benchmarks, including GSM8K, GSM-Hard, SVAMP, and MultiArith, as well as two more challenging mathematical reasoning benchmarks, AMC23 and MATH-500, \textsc{Hyper-ES} improves over model-merging baselines and slightly but consistently outperforms single-stage GRPO. In particular, it achieves up to a 1\% performance gain while requiring 10\% fewer expensive backpropagation steps. Our contributions are as follows:
\begin{itemize}[leftmargin=*]
\item We propose \textsc{Hyper-ES}, a descent-direction-assisted ES framework that converts full-parameter search into a compact layer-wise coefficient search over task-relevant LoRA updates.

\item \textsc{Hyper-ES} uses few-shot GRPO updates to construct informative descent directions, allowing ES to avoid high-dimensional random exploration while retaining its efficiency.

\item Across \textbf{six} math reasoning benchmarks and \textbf{three} LLM backbones, \textsc{Hyper-ES} consistently improves over model-merging baselines and surpasses single-stage GRPO with fewer expensive backpropagation steps.

\end{itemize}

\section{Preliminaries}
\label{sec:background}

\subsection{LLM Reasoning} \label{languageCoT}
The language reasoning process addresses a given question $\boldsymbol{Q}=(q_1,\ldots,q_{|\boldsymbol{Q}|})$ by first generating a series of CoT language reasoning tokens $\boldsymbol{R}=(r_1,\ldots,r_{|\boldsymbol{R}|})$, followed by answer tokens $\boldsymbol{A}=(a_1,\ldots,a_{|\boldsymbol{A}|})$. Both reasoning and answer tokens are produced according to the next-token prediction policy $\pi_\theta$ of LLMs as follows:
\begin{equation}
    \begin{aligned}
        p(\boldsymbol{R},\boldsymbol{A}|\boldsymbol{Q}) = &\prod_{t=1}^{|\boldsymbol{R}|}\pi_{\theta}(r_t|[\boldsymbol{Q},\boldsymbol{r}_{1:t-1}])\\
        & \qquad \prod_{t=1}^{|\boldsymbol{A}|}\pi_{\theta}(a_t|[\boldsymbol{Q},\boldsymbol{R},\boldsymbol{a}_{1:t-1}]),
    \end{aligned}\label{problanguage}\notag
\end{equation}
where $\boldsymbol{r}_{1:t-1}=(r_1,\ldots,r_{t-1})$ and $\boldsymbol{a}_{1:t-1}=(a_1,\ldots,a_{t-1})$; $[\cdot,\cdot]$, $[\cdot,\cdot,\cdot]$ denote concatenation.

\begin{figure*}[htbp]
    \begin{equation}
        \begin{aligned}
            &\mathcal{J}_{\text{GRPO}}(\theta) = \frac{1}{G} \mathbb{E}_{\{\boldsymbol{R},\boldsymbol{A}\}_{g=1}^G \sim p(\cdot,\cdot|\boldsymbol{Q})}\Bigg[\sum_{g=1}^G \frac{1}{\left| \boldsymbol{R}_g \right|+\left| \boldsymbol{A}_g \right|} \sum_{t=1}^{\left| \boldsymbol{R}_g \right|+\left| \boldsymbol{A}_g \right|} \Big( \min \left( p_{g,t} \hat{A}_{g}, \text{clip}(p_{g,t}, 1 - \epsilon, 1 + \epsilon)\hat{A}_{g} \right) \Bigg]\\
            &\quad\hat{\boldsymbol{A}}_{g}=\frac{f(\boldsymbol{A}_{g})-\text{mean}(f(\boldsymbol{A}))_{g=1}^G}{\text{std}(f(\boldsymbol{A}))_{g=1}^G},\qquad p_{g,t} =
            \begin{cases}
                \frac{\pi_{\theta}(a_{g,t} | [\boldsymbol{Q}, \boldsymbol{r}_g, (a_{g,1}, \ldots, a_{g,t-1})])}{\pi_{\theta_{\text{old}}}(a_{g,t} | [\boldsymbol{Q}, \boldsymbol{r}, (a_{g,1}, \ldots, a_{g,t-1})])} & \text{if } t > |\boldsymbol{R}_g| \\[1pt]
                \frac{\pi_{\theta}(\boldsymbol{r}_{g,t} | [\boldsymbol{Q}, (\boldsymbol{r}_{g,1}, \ldots, \boldsymbol{r}_{g,t-1})])}{\pi_{\theta_{\text{old}}}(\boldsymbol{r}_{g,t} | [\boldsymbol{Q}, (\boldsymbol{r}_{g,1}, \ldots, \boldsymbol{r}_{g,t-1})])} & \text{if } t \leq |\boldsymbol{R}_g|.
            \end{cases}
        \end{aligned}\label{grpo}
    \end{equation}\vspace{-5pt}
\end{figure*}

\paragraph{RL Fine-tuning for LLM Reasoning} RL methods—such as Group Relative Policy Optimization (GRPO) \cite{shao2024deepseekmath}, Dr. GRPO \cite{liu2025understanding}, DAPO \cite{yu2025dapo}—sample multiple candidate CoTs $[\boldsymbol{R}, \boldsymbol{A}]$ per question and assess each with a reward reflecting the answer quality $\boldsymbol{A}$. For example, in standard GRPO \cite{shao2024deepseekmath} (shown in Eq. \eqref{grpo}), $G$ candidate CoTs $\{\boldsymbol{R},\boldsymbol{A}\}_{g=1}^G$ are generated for each $\boldsymbol{Q}$, and the objective is updated according to the relative advantage $\hat{\boldsymbol{A}}_g$ within these $G$ samples. These RL-based approaches often significantly surpass Supervised Fine-tuning (SFT) in mathematical reasoning tasks, making them the top choice of LLM Reasoning. However, due to the long rollout trajectory (usually thousands of tokens), updating the parameter through backpropagation consumes a huge amount of time and space, which is unaffordable for resource-constrained application scenarios.

\subsection{Evolution Strategy for LLM Reasoning}
\label{sec:es_prelim}
\label{sec:random_walk}

\paragraph{ES Fine-tuning for LLM Reasoning} Direct ES starts from the pretrained parameters $\theta\in\mathbb{R}^d$ and repeatedly builds a population of nearby models. At iteration $t$, the current model is $\theta_t$, $\theta_0=\theta$. ES samples $G$ random perturbations from the Gaussian distribution and forms a population as follows:
\begin{equation}
    \begin{aligned}
        \theta_{t,k}
        =
        \theta_t
        +
        &\epsilon_{t,k},
        \quad
        \epsilon_{t,k}\sim\mathcal{N}(0,\sigma ^2I_d),\\
        & k=1,\ldots,G,
    \end{aligned}
    \label{eq:llm_scale_sampling}
\end{equation}
where $\epsilon_{t,k}$ is a random direction in the parameter space and $\sigma$ controls the degree of the perturbation. Each candidate model $\theta_{t,k}$ is evaluated by a scalar fitness $f(\theta_{t,k})$, such as a verifiable reward or validation accuracy (e.g., the answer-based reward $\hat{\boldsymbol{A}}_g$ in Eq. \eqref{grpo}). These fitness values are then used to estimate an update direction:
\begin{equation}
    \widehat{g}_{\mathrm{ES}}(\theta_t)
    =
    \frac{1}{G}
    \sum_{k=1}^{G}
    f(\theta_{t,k})\epsilon_{t,k}.
    \label{eq:es_update_estimator}\notag
\end{equation}
The current parameters are updated as follows \cite{salimans2017evolution,qiu2025evolution,sun2026essam}:
\begin{equation}
    \theta_{t+1}
    =
    \theta_t
    +
    \eta_t
    \widehat{g}_{\mathrm{ES}}(\theta_t).
    \label{eq:llm_scale_update}\notag
\end{equation}
Here $\eta_t$ is the ES learning rate. Due to ES only requiring model forward rollout and evaluations, instead of backpropagation, ES-based LLM fine-tuning methods usually consume less time and space compared to gradient-based methods (e.g., GRPO). As mentioned in \citet{sun2026essam}, ES consumes $10\times$ less GPU memory usage, making it suitable for resource-constrained scenarios.

However, when $\theta_t$ is of thousands, even billions of dimensions, every $\epsilon_{t,k}$ is a random vector in a billion-dimensional space.

\paragraph{Problem 1: directional discovery failure.}
The first drawback of direct ES is that it can hardly discover descent directions by random search. In high dimensions, a random perturbation is likely to be nearly orthogonal to any useful descent direction. This remains true considering ES populations, because the population size grows far more slowly than the ambient parameter dimension.

\begin{lemma}[High-dimensional angles concentrate near orthogonality]
    \label{lem:near_orthogonal}
    Let $g^*=-\nabla_\theta \mathcal{L}(\theta_t)$ be any fixed nonzero descent direction. Let $u_1,\ldots,u_G$ be $G=G(d)$ independently sampled unit perturbation directions in $\mathbb{R}^d$, and define the acute angle
    \begin{equation}
        \alpha_k
        =
        \arccos
        \left(
        \left|
        \left\langle u_k,\frac{g^*}{\|g^*\|}\right\rangle
        \right|
        \right)
        \in[0,\pi/2].
    \end{equation}
    Given a limited population (i.e., $\log G=o(d)$),
    \begin{equation}
        \max_{1\leq k\leq G}
        \left|
        \left\langle u_k,\frac{g^*}{\|g^*\|}\right\rangle
        \right|
        \xrightarrow[d\to\infty]{p}
        0,
        \label{eq:alignment_background}
    \end{equation}
    or equivalently,
    \begin{equation}
        \min_{1\leq k\leq G}\alpha_k
        \xrightarrow[d\to\infty]{p}
        \frac{\pi}{2}.
    \end{equation}
\end{lemma}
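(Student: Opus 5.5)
Writing $v=g^*/\|g^*\|$, the plan is to exploit rotation invariance and then apply a concentration bound with a union bound. We interpret ``independently sampled unit perturbation directions'' as $u_k=\epsilon_{t,k}/\|\epsilon_{t,k}\|$ with $\epsilon_{t,k}$ drawn as in Eq.~\eqref{eq:llm_scale_sampling}. Then $u_k$ is uniform on the sphere $S^{d-1}$, and the law of $\langle u_k,v\rangle$ does not depend on $v$. We may therefore take $v=e_1$, so that $\langle u_k,v\rangle$ is the first coordinate of a uniform point on $S^{d-1}$. Since $\theta_t$ and hence $g^*$ are fixed before the perturbations are drawn, $v$ is independent of the $u_k$, and rotation invariance applies conditionally on $v$. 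The second display follows from the first, because $\min_k\alpha_k=\arccos\bigl(\max_k|\langle u_k,v\rangle|\bigr)$ and $\arccos$ is continuous at $0$ with $\arccos 0=\pi/2$. The continuous mapping theorem then gives the claim.

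The key step is a sub-Gaussian tail bound of the form
\begin{equation}
\Pr\bigl(|\langle u,v\rangle|>\varepsilon\bigr)\le 2e^{-d\varepsilon^2/2}.
\end{equation}
We would prove this via the standard spherical cap estimate, or equivalently via Lévy concentration on the sphere. If we want to be fully explicit, we would write $u=Z/\|Z\|$ with $Z\sim\mathcal N(0,I_d)$ and split the event as
\begin{equation}
\{|Z_1|>\varepsilon\sqrt{d}/2\}\cup\{\|Z\|<\sqrt{d}/2\}.
\end{equation}
The first event has probability at most $2e^{-d\varepsilon^2/8}$ by the Gaussian tail. The second has probability at most $e^{-cd}$ by chi-square concentration, for instance the Laurent--Massart bound. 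Combining the two gives a bound of the form $C e^{-c'd\varepsilon^2}$ for $\varepsilon\le1$, which is all we need.

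Next we apply the union bound over the $G$ perturbations:
\begin{equation}
\Pr\Bigl(\max_k|\langle u_k,v\rangle|>\varepsilon\Bigr)\le G\,Ce^{-c'd\varepsilon^2}=C\exp\bigl(\log G-c'd\varepsilon^2\bigr).
\end{equation}
The union bound does not use independence, so independence is not essential here. Because $\log G=o(d)$, for each fixed $\varepsilon>0$ the exponent tends to $-\infty$. This gives convergence in probability to $0$.

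The main obstacle is only bookkeeping: obtaining the tail bound with a dimension-linear exponent and explicit constants, so that the union bound absorbs $\log G=o(d)$. The Gaussian-normalization route handles this cleanly. We would also remark that the condition is essentially sharp. By the central limit theorem $\sqrt d\,\langle u_k,v\rangle$ is approximately standard normal, so the maximum is of order $\sqrt{2\log G/d}$, and $\log G\asymp d$ would prevent convergence to $0$.
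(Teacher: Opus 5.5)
Your proposal is correct and follows the paper's proof essentially step for step: rotation invariance to reduce to $e_1$, the spherical-cap tail bound $2e^{-d\varepsilon^2/2}$, a union bound that $\log G=o(d)$ absorbs, and continuity of $\arccos$ at $0$ for the angle formulation. Your explicit Gaussian-normalization derivation of the tail bound (with weaker but sufficient constants) and your explicit uniform-on-the-sphere interpretation of the $u_k$ only fill in details that the paper treats as standard.
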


The proof is provided in Appendix~\ref{app:random_walk_proofs}. For billion-parameter LLMs, Lemma~\ref{lem:near_orthogonal} implies that a practical ES population is dominated by perturbations whose angles to the loss-reducing direction are close to $90^\circ$. The population can contain models with different rewards, but those reward differences do not imply that ES has found a direction close to the optimal update. Instead, when the useful projection onto $g^*$ is tiny, the update induced by population reweighting is largely formed from components orthogonal to $g^*$.

\paragraph{Problem 2: orthogonal random-walk.}
The second drawback is the accumulation of these orthogonal updates. Even if each individual orthogonal component is uninformative for loss reduction, repeated ES updates can accumulate in the parameter space and cause the model to drift away from the pretrained initialization.

\begin{lemma}[Irrelevant ES updates accumulate parameter drift]
    \label{lem:random_walk}
    Let $r_t=\widehat{g}_{\mathrm{ES}}(\theta_t)-\mathrm{Proj}_{g^*}\widehat{g}_{\mathrm{ES}}(\theta_t)$ denote the component of the ES update estimate orthogonal to $g^*$. Suppose the useful component is negligible and the orthogonal components have a nonzero second moment, $\mathbb{E}\|r_t\|_2^2=\rho^2>0$. Then, after $T$ ES steps, the accumulated orthogonal parameter displacement satisfies
    \begin{equation}
        \mathbb{E}\!\left[
        \left\|
        \sum_{t=0}^{T-1}\eta_t r_t
        \right\|_2^2
        \right]
        \approx
        \rho^2\sum_{t=0}^{T-1}\eta_t^2.
        \label{eq:no_descent_progress}
    \end{equation}
    Thus, ES can induce a large parameter-space drift even when it makes little progress along the optimal direction, leading to unstable optimization.
\end{lemma}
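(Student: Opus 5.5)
The plan is to expand the squared norm of the accumulated displacement into diagonal and cross terms:
\begin{equation}
\mathbb{E}\Big\|\sum_{t=0}^{T-1}\eta_t r_t\Big\|_2^2=\sum_{t=0}^{T-1}\eta_t^2\,\mathbb{E}\|r_t\|_2^2+2\sum_{0\le s<t\le T-1}\eta_s\eta_t\,\mathbb{E}\langle r_s,r_t\rangle .
\notag
\end{equation}
I treat the step sizes $\eta_t$ as deterministic. Under the hypothesis $\mathbb{E}\|r_t\|_2^2=\rho^2$, the diagonal sum equals exactly $\rho^2\sum_t\eta_t^2$. The whole argument therefore reduces to showing that the cross terms are negligible. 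This is where the ``$\approx$'' in \eqref{eq:no_descent_progress} enters.

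For the cross terms I would use a martingale-difference structure. Let $\mathcal{F}_t$ be the $\sigma$-algebra generated by all perturbations drawn before iteration $t$, so $\theta_t$ is $\mathcal{F}_t$-measurable. At iteration $t$ the perturbations $\epsilon_{t,k}$ are fresh isotropic Gaussians independent of $\mathcal{F}_t$. The hypothesis that ``the useful component is negligible'' is the idealisation in which the fitness carries no directional signal in the orthogonal complement of $g^*$. I would formalise this as the conditional mean-zero condition $\mathbb{E}[r_t\mid\mathcal{F}_t]\approx 0$.

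This condition is justified by Lemma~\ref{lem:near_orthogonal} together with rotational symmetry. The conditional mean of $\widehat g_{\mathrm{ES}}$ is the Gaussian-smoothed gradient $\sigma^2\nabla\mathbb{E}f(\theta_t+\epsilon)$, and its orthogonal part is exactly the ``useful'' signal assumed to vanish. What remains of $r_t$ is pure sampling noise. Then for $s<t$, $r_s$ is $\mathcal{F}_t$-measurable, and the tower property gives
\begin{equation}
\mathbb{E}\langle r_s,r_t\rangle=\mathbb{E}\langle r_s,\mathbb{E}[r_t\mid\mathcal{F}_t]\rangle\approx 0 .
\notag
\end{equation}
This kills every cross term and yields the claimed approximation. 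If the conditional mean is only small rather than zero, say $\|\mathbb{E}[r_t\mid\mathcal{F}_t]\|\le\delta$, I would bound the cross terms by Cauchy--Schwarz by $2\rho\delta\sum_{s<t}\eta_s\eta_t$. This is an error term that is $o(\rho^2\sum_t\eta_t^2)$ whenever $\delta\ll\rho$ relative to the step-size schedule.

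The main obstacle is making ``negligible useful component'' precise enough that the cross terms provably vanish. The orthogonal complement of $g^*$ changes with $t$, and $f$ is not rotation-invariant. My first attempt would simply state the conditional mean-zero (martingale-difference) condition as the formal content of the hypothesis and present the lemma as exact under it, with the $\delta$-perturbation bound above quantifying the approximation. Finally, I would remark that the right-hand side grows linearly in $T$ for constant $\eta$. This gives drift of order $\rho\eta\sqrt{T}$ with no accompanying descent, which is the concluding interpretive claim.
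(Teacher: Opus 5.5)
Your proposal follows the paper's proof: you expand $\mathbb{E}\|\sum_t\eta_t r_t\|_2^2$ into diagonal terms, which equal $\rho^2\sum_t\eta_t^2$ exactly, plus cross terms that are removed by an assumption. The paper simply assumes $\mathbb{E}\langle r_s,r_t\rangle\approx 0$ for $s\neq t$, whereas you derive it from a conditional mean-zero (martingale-difference) condition via the tower property and add a Cauchy--Schwarz error bound, which is a cleaner formalisation of the same step. One caveat: in the paper the ``useful component'' is the part along $g^*$, so your condition $\mathbb{E}[r_t\mid\mathcal{F}_t]\approx 0$ is an extra hypothesis, just as the paper's cross-term assumption is, and not a reading of that phrase or a consequence of Lemma~\ref{lem:near_orthogonal}.
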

\begin{figure}[t]
    \centering
    \subfigure[ES leads to wrong directions]{\includegraphics[width = 0.28\textwidth]{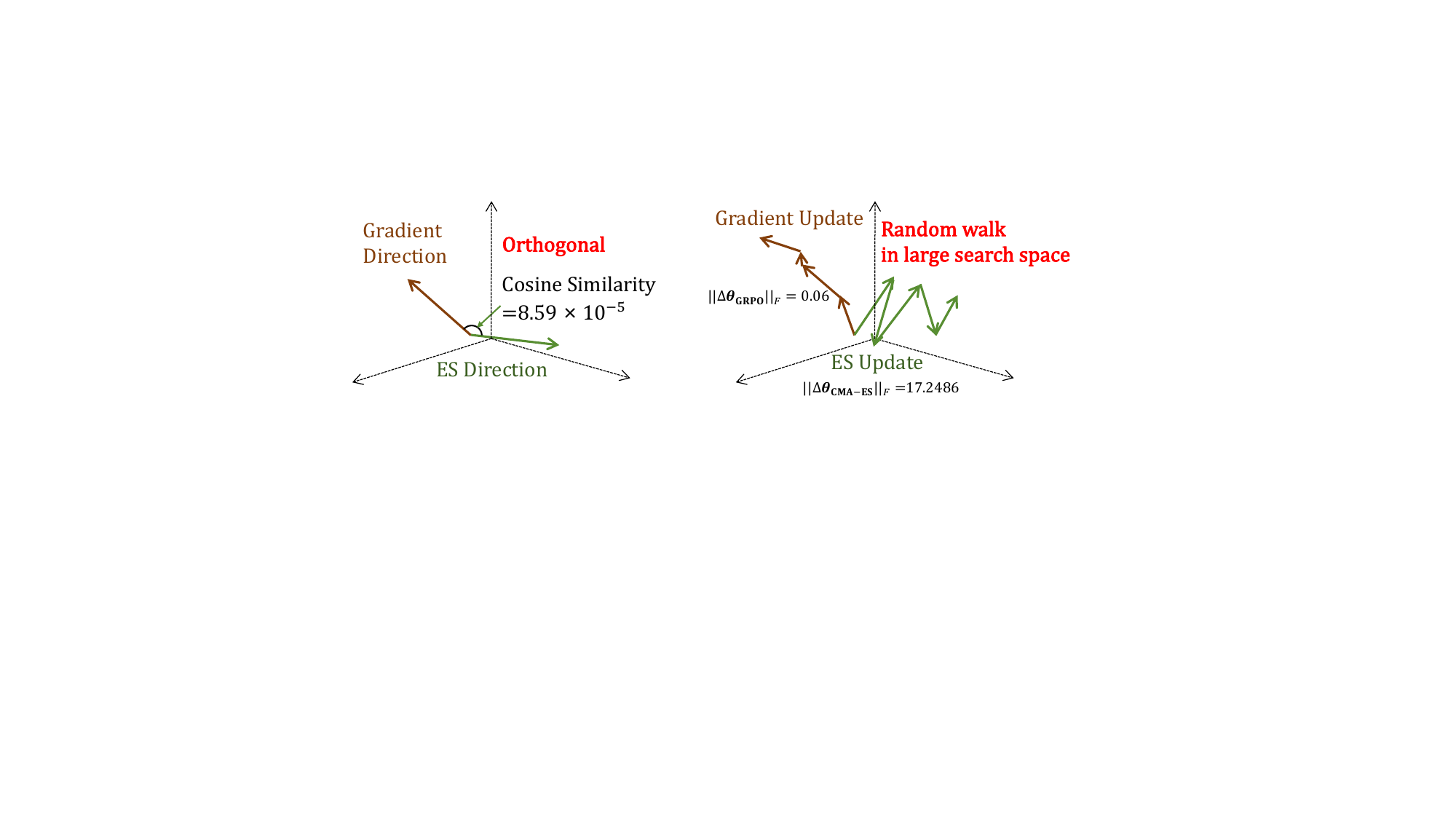}}
    \subfigure[ES leads to random walk]{\includegraphics[width = 0.32\textwidth]{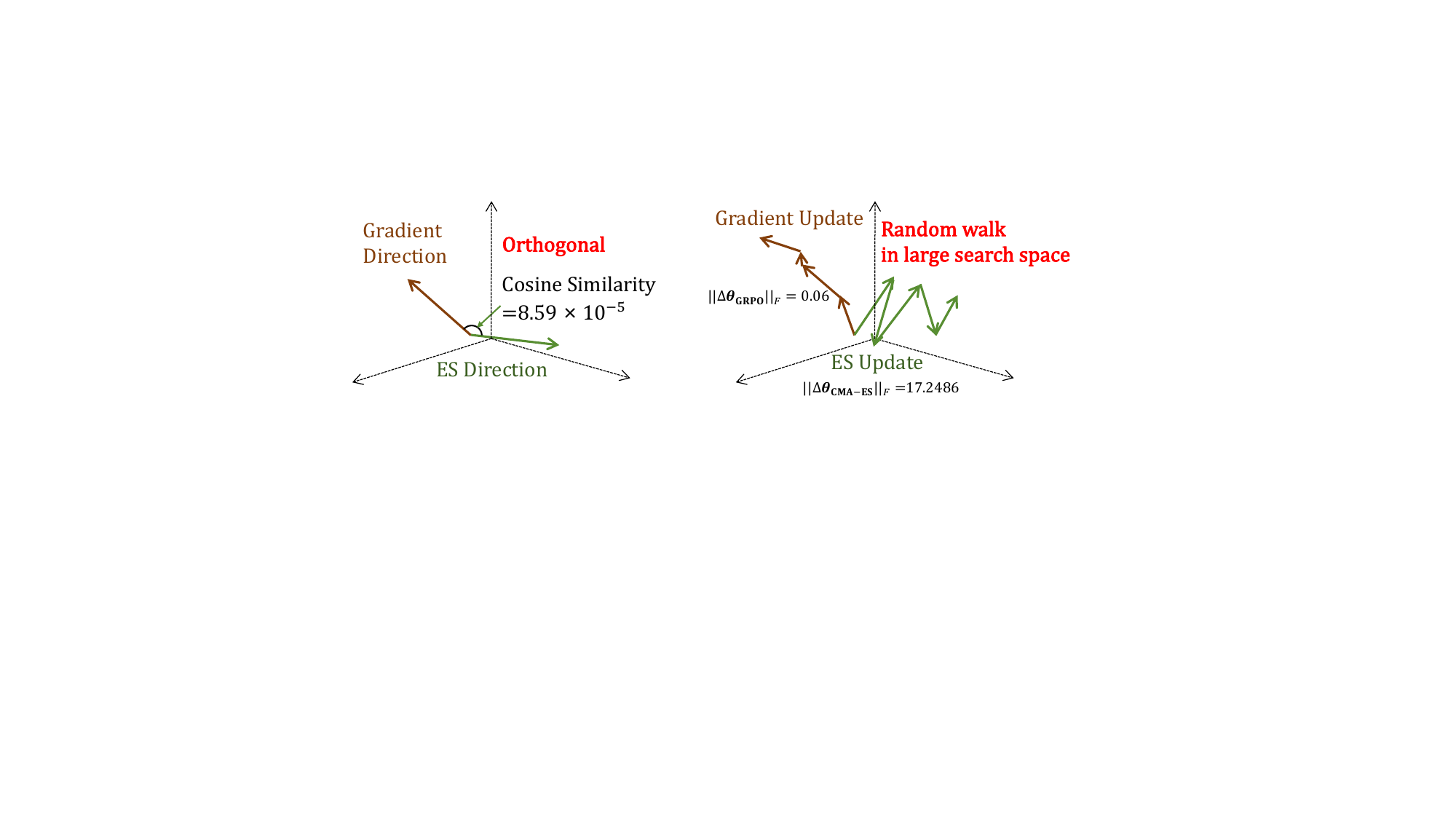}}
    \caption{Evidence for Lemmas~\ref{lem:near_orthogonal} and~\ref{lem:random_walk}. We show the detailed raw data for this figure in Appendix \ref{app:es_drift_evidence}.}\label{fig:figure15}
\end{figure}

We show evidence for these two in Figure \ref{fig:figure15} and the proof is provided in Appendix~\ref{app:random_walk_proofs}. Together, Lemmas~\ref{lem:near_orthogonal} and~\ref{lem:random_walk} explain why direct ES becomes \textbf{ineffective and unstable} at the LLM scale. The population is likely to consist of perturbations nearly orthogonal to the optimal descent direction, while repeated updates along unrelated directions can still move the model far from the pretrained parameters. This geometric mismatch is consistent with the observation of \citet{hoy2026matching}.

This motivates two requirements for ES-based LLM reasoning fine-tuning. \textbf{1)} The search dimension must be small enough for population-based adaptation to be meaningful, and \textbf{2)} the reduced search space must contain task-relevant directions. \textsc{Hyper-ES} satisfies both requirements by constructing a low-dimensional search space from descent directions.

\section{Method: Hyper-ES}
\label{sec:method}

\textsc{Hyper-ES} follows the pipeline in Figure~\ref{fig:figure1}(c): it first prepares a small set of task-relevant descent directions, and then runs ES over combinations of these directions rather than over the full model parameters. The role of ES is therefore changed from direction discovery to direction selection. Figure~\ref{fig:figure2} gives the implementation details behind this design: the left panel shows how the descent directions are obtained, and the right panel shows how a search variable is decoded into a merged model.

\begin{figure*}[htbp]
    \centering
    \subfigure[Descent Directions from few-shot Gradient-based Updates]{\includegraphics[width = 0.30\textwidth]{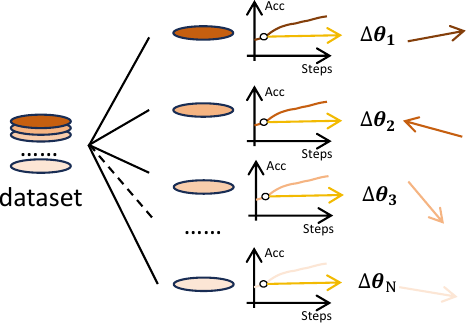}}\quad
    \subfigure[Model Merging Process with ES-sampled coefficients $\boldsymbol{z}$]{\includegraphics[width = 0.66\textwidth]{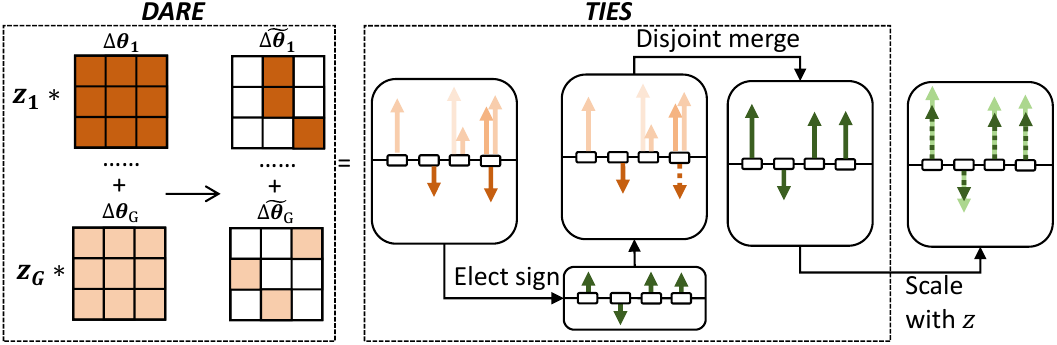}}
    \caption{Implementation details of \textsc{Hyper-ES}. (a) A small number of few-shot GRPO updates extract task-relevant descent directions from different data subsets. (b) A search variable $\boldsymbol{z}$ is decoded into layer-wise DARE--TIES merging coefficients that combine these directions into a final model.}\label{fig:figure2}
\end{figure*}

\subsection{Start-up: Preparing Descent Directions}
\label{sec:direction_pool}

\textsc{Hyper-ES} starts from a pretrained model with parameters $\boldsymbol{\theta}$. As illustrated in Figure~\ref{fig:figure2}(a), we partition the training data into $N$ subsets $\{\mathcal{D}_i\}_{i=1}^{N}$ and run one short LoRA-based GRPO update on each subset, typically 7 update steps. These short runs are not meant to produce strong, standalone models. Their purpose is to cheaply reveal descent directions that already contain a task signal, so that the later ES stage does not need to find such directions through random perturbations in the original parameter space.

Let $\boldsymbol{\theta}_i$ denote the parameters obtained after the $i$-th short GRPO run. For LLMs with $L$ layer-wise modules (which means, attention or MLP), indexed by $\ell\in\{1,\ldots,L\}$, we define the descent direction as follows:
\begin{equation}
    \Delta\boldsymbol{\theta}_i^{(\ell)}
    =
    \boldsymbol{\theta}_i^{(\ell)} - \boldsymbol{\theta}_0^{(\ell)},
    \label{eq:task_vector_main}\notag
\end{equation}
where $\boldsymbol{\theta}_0^{(\ell)}$ and $\boldsymbol{\theta}_i^{(\ell)}$ are the base parameters and the parameters after the $i$-th short run for layer-wise group $\ell$. When the group corresponds to a LoRA-updated weight matrix with rank $r$ and scale $\alpha$, this delta is computed without materializing a full fine-tuned model as follows:
\begin{equation}
\begin{aligned}
    &\qquad\Delta\boldsymbol{\theta}_i^{(\ell)}
    =
    \frac{\alpha}{r}\boldsymbol{B}_i^{(\ell)}\boldsymbol{A}_i^{(\ell)}.
    \\ \mathcal{P}
    =&
    \left\{\Delta\boldsymbol{\theta}_i^{(\ell)}:
    i=1,\ldots,N,\; \ell=1,\ldots,L
    \right\}\notag
\end{aligned}
\end{equation}
Then $\mathcal{P}$ is the direction pool, which can provide descent directions for the following merging search of Hyper-ES. 

\subsection{Hyper-ES Search over Directions}
\label{sec:merging_subspace}

After the start-up stage, \textsc{Hyper-ES} keeps the pretrained model and the direction pool fixed. It then searches over a compact variable $\boldsymbol{z}$ that specifies how to combine the cached directions. For each layer group, $\boldsymbol{z}$ contains two types of variables: drop-rate logits controlling DARE \cite{yu2024dare} sparsification, and mixing-weight logits controlling how strongly each direction contributes. For
\begin{equation}
    \boldsymbol{z}
    =
    \{\boldsymbol{z}_\ell\}_{\ell=1}^{L}
    \in \mathbb{R}^{L\times 2N},
    \quad
    \boldsymbol{z}_\ell=(\boldsymbol{a}_{\ell,1:N}, \boldsymbol{b}_{\ell,1:N}),
    \label{eq:z_definition_main}\notag
\end{equation}
where $a_{\ell,i}$ is a drop-rate logit and $b_{\ell,i}$ is a mixing-weight logit for direction $i$ in the layer-wise group $\ell$. These logits are decoded as follows:
\begin{equation}
    p_i^{(\ell)}=\sigma(a_{\ell,i}),
    \qquad
    \omega_i^{(\ell)}=\exp(b_{\ell,i}),
    \label{eq:decode_main}\notag
\end{equation}
where $p_i^{(\ell)}\in(0,1)$ is the DARE drop rate and $\omega_i^{(\ell)}>0$ is the non-negative mixing weight.

\paragraph{From search variables to a model.}
As shown in Figure \ref{fig:figure2} (b), given $\boldsymbol{z}$, \textsc{Hyper-ES} decodes $p_i^{(\ell)}$ and $\omega_i^{(\ell)}$, applies DARE-style sparsification to the directions, resolves sign conflicts with weighted TIES \cite{yadav2023tiesmerging}, and obtains a merged update $\overline{\Delta\boldsymbol{\theta}}^{(\ell)}(\boldsymbol{z}_\ell)$ for each group. The final model is then assembled as follows:
\begin{equation}
    \boldsymbol{\theta}_*^{(\ell)}(\boldsymbol{z})
    =
    \boldsymbol{\theta}_0^{(\ell)}
    +
    \gamma\overline{\Delta\boldsymbol{\theta}}^{(\ell)}(\boldsymbol{z}_\ell),
    \quad \ell=1,\ldots,L,
    \label{eq:merged_weight_main}\notag
\end{equation}
where $\gamma>0$ is a global scale selected by validation search. Figure~\ref{fig:figure2}(b) illustrates this decoding path. Full DARE--TIES equations are provided in Appendix~\ref{app:merging_operator}. Equivalently, with the base parameters $\boldsymbol{\theta}_0$ and direction pool $\mathcal{P}$ fixed, the merging operator is
\begin{equation}
    \mathcal{M}_{\boldsymbol{\theta}_0,\mathcal{P}}:
    \boldsymbol{z}\in\mathbb{R}^{L\times 2N}
    \mapsto
    \boldsymbol{\theta}_*(\boldsymbol{z}),
    \label{eq:merging_operator_main}\notag
\end{equation}
where $\boldsymbol{\theta}_*(\boldsymbol{z})=\mathcal{M}_{\boldsymbol{\theta}_0,\mathcal{P}}(\boldsymbol{z})$ denotes the full model assembled from $\{\boldsymbol{\theta}_*^{(\ell)}(\boldsymbol{z})\}_{\ell=1}^{L}$ and the unchanged base parameters. This construction changes the optimization problem from searching over millions or billions of weights to searching over $L\times 2N$ merging variables. In our settings, the search dimension is $960$ for the 0.5B backbones (with $L=48$ layer groups). The objective is to maximize accuracy on a 600-size validation dataset after merging:
\begin{equation}
    \boldsymbol{z}^*
    =
    \arg\max_{\boldsymbol{z}\in\mathbb{R}^{L\times 2N}}
    f\!\left(\boldsymbol{\theta}_*(\boldsymbol{z})\right),
    \label{eq:search_objective_main}
\end{equation}
where $f$ denotes accuracy on a validation set.

\paragraph{CMA-ES optimization.}
We optimize Eq.~\eqref{eq:search_objective_main} with CMA-ES. Following the DARE--TIES merging procedure, we first perform a grid search over shared merging hyperparameters to find a stable starting point. CMA-ES is then initialized from this point and directly searches the full vector $\boldsymbol{z}$, allowing each layer group to choose direction-specific drop rates and mixing weights. It maintains a Gaussian search distribution over the $L\times 2N$-dimensional coefficient space,
\begin{equation}
    \boldsymbol{z}_k^{(t)}
    \sim
    \mathcal{N}\!\left(
    \boldsymbol{z}^{(t)},
    (\sigma_{\boldsymbol{z}}^{(t)})^2 \boldsymbol{C}_{\boldsymbol{z}}^{(t)}
    \right),
    \quad k=1,\ldots,G.
    \label{eq:cma_sampling_main}
\end{equation}
Each sampled candidate is decoded into a merged model $\boldsymbol{\theta}_*(\boldsymbol{z}_k^{(t)})$ and evaluated by validation accuracy. After ranking candidates by their validation fitness $f\!\left(\boldsymbol{\theta}_*(\boldsymbol{z})\right)$, the top-$\mu$ (we have $\mu=\frac{G}{2}$ in our experiments) candidates update the search mean as follows:
\begin{equation}
\begin{aligned}
    &\qquad\boldsymbol{z}^{(t+1)}
    =
    \sum_{k=1}^{\mu} \boldsymbol{z}_{k:G}^{(t)},
\end{aligned}
    \label{eq:cma_mean_main}
\end{equation}
where $\boldsymbol{z}_{k:G}^{(t)}$ denotes the $k$-th ranked candidate in generation $t$. We use the final distribution mean as $\boldsymbol{z}^*$ and return the merged model $\boldsymbol{\theta}_*(\boldsymbol{z}^*)$. 

\section{Experiments}\label{experiment-section}

In this section, we implement the proposed \textsc{Hyper-ES} algorithm to reinforce the mathematical reasoning capabilities of three LLMs, including \textit{Qwen2.5-0.5B-Instruct}, \textit{Qwen2.5-1.5B-Instruct}, and \textit{DeepSeek-R1-Distill-Qwen-1.5B}.

\subsection{Implementation Details}

\paragraph{Training \& Testing Settings.} This paper adopts two training datasets to incorporate reasoning step-count information, i.e., GSM8K-Aug~\citep{deng2024explicit} and the DeepScaler dataset~\citep{2025deepscaler}. We involve four arithmetic reasoning benchmarks (GSM8K, SVAMP, MultiArith, and GSM-Hard) and two high-difficulty mathematics benchmarks (MATH-500 and AMC2023) for testing. Test answers are verified using Math-Verify \citep{Kydlicek_Math-Verify_Math_Verification} after standard answer normalization. For the Qwen2.5 backbones, the maximum generation length is capped at 1024 tokens during both training and evaluation. For the DeepSeek-R1-Distill-Qwen-1.5B model, the generation length is limited to 6144 tokens during training and evaluation.

In preparing each $\Delta \theta$ for Hyper-ES, we use the inherent number of reasoning hops in GSM8K-Aug and categories in DeepScaler to build subsets. Each LoRA direction is produced by only 7 GRPO steps with a batch size of 256, consuming 1.8k samples per direction. We then build $N=10$ directions for Qwen2.5-Instruct and $N=7$ directions for DeepSeek-R1-Distill, which limits the total training data and the gradient calculation of the proposed Hyper-ES to 17,920 and 12,544 samples, respectively. In contrast, the GRPO/CMA-ES baseline is trained on the full set of direction-generation subsets (utilizing approximately 20,000 samples for GSM8K-Aug and 14,000 samples for DeepScaler), which requires more training steps and samples in total. The specific LoRA configurations, including rank $r=32$ and scaling $\alpha=64$, are provided in Appendix~\ref{app:training_lora}. In implementing the baseline reinforcement learning training, we utilize the verl-0.4.x framework for RLVR. Please refer to Appendix~\ref{app:search_details} for more details in the ES stage.

\paragraph{Baselines.} We compare the proposed \textsc{Hyper-ES} against several baselines: (i) the \textbf{Base} model without fine-tuning, (ii) \textbf{GRPO+LoRA}~\citep{shao2024deepseekmath} trained on the full dataset with equivalent gradient update steps, (iii) \textbf{CMA-ES+LoRA} directly applied over the original LoRA parameter space, and (iv) \textbf{Average Merge}~\citep{ilharco2022editing}, which averages all LoRA task vectors uniformly. For all evaluated models, the Qwen2.5 arithmetic benchmarks are evaluated under greedy decoding (temperature $\tau=0$). For the DeepSeek-R1-Distill-Qwen-1.5B model, we use sampling decoding under standard evaluation configurations, setting the temperature $\tau=0.6$ and top-p to $0.95$.

\paragraph{Metrics.} For the Qwen2.5 LLMs, we evaluate using standard accuracy (\%) under greedy decoding ($T=0$). For the DeepSeek-R1-Distill-Qwen-1.5B, we report Mean@32—defined as the average Pass@1 accuracy over 32 independent sampling runs—to effectively reduce the variance of the generations on competitive mathematics datasets.

\subsection{Performance on Arithmetic Reasoning Benchmarks}

As shown in Table~\ref{tab:main}, we present the evaluation of \textsc{Hyper-ES} and baselines on arithmetic reasoning datasets. \textbf{DARE+CMA-ES} represents applying CMA-ES for coefficients over the vanilla DARE merging method \cite{yu2024dare}. Overall, \textsc{Hyper-ES} outperforms direct weight-space evolutionary search and matches or slightly exceeds the GRPO+LoRA baseline.

On the Qwen2.5-0.5B-Instruct base LLM, \textsc{Hyper-ES} achieves an average accuracy of 57.13\%, slightly outperforming the GRPO+LoRA baseline (56.23\%) and yielding a substantial margin over CMA-ES+LoRA (52.76\%). On Qwen2.5-1.5B-Instruct, \textsc{Hyper-ES} achieves an average score of 74.26\% (compared to 73.51\% for GRPO+LoRA). Compared to Average Merge, which yields 72.36\%, \textsc{Hyper-ES} demonstrates the advantage of optimizing layer-wise weighting coefficients instead of executing uniform parameter mixing.

\begin{table*}[t]
    \centering
    \renewcommand\arraystretch{1}
    \setlength{\tabcolsep}{3mm}
    \caption{Arithmetic reasoning results on Qwen2.5 models. Accuracy (\%) is reported. Best result per benchmark within each model group is \underline{underlined}; best average is \textbf{boldfaced}; second-best average is \colorbox[HTML]{E8E8E8}{shaded}.}
    \small
    \begin{tabular}{l|cccc|c}
        \toprule[0.5mm]
        \multicolumn{1}{c|}{}
        & \multicolumn{1}{c}{In-domain}
        & \multicolumn{3}{c|}{Out-of-domain}
        & \multicolumn{1}{c}{ } \\
        \cmidrule(lr){2-2}\cmidrule(lr){3-5}
        \multicolumn{1}{c|}{Method}
        & GSM8K & SVAMP & MultiArith & GSM-Hard & Avg. \\
        \midrule[0.3mm]
        \multicolumn{6}{c}{\textbf{Qwen2.5-0.5B-Instruct}} \\
        \midrule
        Base
        & 46.93 & 49.67 & 78.33 & 16.60 & 47.88 \\
        GRPO+LoRA
        & 50.57 & 61.33 & \underline{95.56} & 17.44 & \cellcolor[HTML]{E8E8E8}56.23 \\
        CMA-ES+LoRA
        & 45.94 & 56.33 & 92.22 & 16.53 & 52.76 \\
        Average Merge
        & \underline{51.48} & 61.00 & 92.78 & 17.29 & 55.64 \\
        DARE+CMA-ES & 47.69 & 53.33 & 87.22 & 16.15 & 51.10 \\
        \midrule
        \textbf{\textsc{Hyper-ES}{} (Ours)}
        & 51.18 & \underline{65.33} & 93.89 & \underline{18.12} & \textbf{57.13} \\
        \midrule[0.3mm]
        \multicolumn{6}{c}{\textbf{Qwen2.5-1.5B-Instruct}} \\
        \midrule
        Base
        & 71.49 & 82.00 & 93.33 & 35.94 & 70.69 \\
        GRPO+LoRA
        & 74.22 & \underline{84.67} & \underline{97.22} & 37.91 & \cellcolor[HTML]{E8E8E8}73.51 \\
        CMA-ES+LoRA
        & 73.54 & 82.67 & 95.00 & 36.92 & 72.03 \\
        Average Merge
        & 71.42 & 82.67 & 93.33 & 35.56 & 70.75 \\
        \midrule
        \textbf{\textsc{Hyper-ES}{} (Ours)}
        & \underline{75.51} & \underline{84.67} & \underline{97.22} & \underline{39.65} & \textbf{74.26} \\
        \bottomrule[0.5mm]
    \end{tabular}
    \label{tab:main}
\end{table*}
\begin{table*}[t]
    \centering
    \renewcommand\arraystretch{1}
    \setlength{\tabcolsep}{4.6mm}
    \caption{Results on DeepSeek-R1-Distill-Qwen-1.5B trained with the DeepScaler competition curriculum. All scores are Mean@32 (\%)---the average of 32 independent Pass@1 evaluations, multiplied by 100. Best result per benchmark is \underline{underlined}; best average is \textbf{boldfaced}; second-best average is \colorbox[HTML]{E8E8E8}{shaded}.}
    \small
    \begin{tabular}{l|ccc|c}
        \toprule[0.5mm]
        \multicolumn{5}{c}{\textbf{DeepSeek-R1-Distill-Qwen-1.5B}} \\
        \midrule[0.3mm]
        \multicolumn{1}{c|}{Method}
        & GSM8K & MATH-500 & AMC-23 & Avg. \\
        \midrule[0.3mm]
        Base
        & 76.06 & 74.79 & 57.34 & 69.40 \\
        GRPO+LoRA
        & 76.07 & 75.48 & \underline{59.61} & \cellcolor[HTML]{E8E8E8}70.39 \\
        CMA-ES+LoRA
        & 76.10 & 74.97 & 57.81 & 69.63 \\
        Average Merge
        & 75.90 & 75.29 & 57.11 & 69.43 \\
        \midrule
        \textbf{\textsc{Hyper-ES}{} (Ours)}
        & \underline{76.49} & \underline{76.97} & 59.45 & \textbf{70.97} \\
        \bottomrule[0.5mm]
    \end{tabular}
    \label{tab:deepseek}
\end{table*}

\begin{table*}[t]
    \centering\small
    \renewcommand\arraystretch{1}
    \setlength{\tabcolsep}{6pt}
    \caption{Ablation study on Qwen2.5-0.5B-Instruct. All scores are test-set accuracies (\%). \textbf{Avg.}\ = mean of GSM8K, SVAMP, MultiArith, and GSM-Hard. The $\Delta$ column reports average-score change.}
    \label{tab:ablation}
    \begin{tabular}{l c c c c c c}
        \toprule[0.5mm]
        \textbf{Variant} & \textbf{GSM8K} & \textbf{SVAMP} & \textbf{MultiArith} & \textbf{GSM-Hard} & \textbf{Avg.} & \textbf{Avg. $\Delta$} \\
        \midrule[0.3mm]
        \textbf{\textsc{Hyper-ES}{} (Ours)} & 51.18 & 65.33 & 93.89 &  18.12 & \textbf{57.13} & \textcolor{ForestGreen}{+0.00} \\
        \textit{w/o} CMA-ES & 50.72 & 60.67 & 95.00 & 17.29 & 55.92 & \textcolor{red}{-1.21} \\
        \midrule[0.3mm]
        \textit{w/o} Grouping & 48.75 & 57.33 & 91.11 & 17.29 & 53.62 & \textcolor{red}{-3.51} \\
        \bottomrule[0.5mm]
    \end{tabular}
\end{table*}

\begin{table*}[t]
    \centering
    \renewcommand\arraystretch{1.05}
    \setlength{\tabcolsep}{3pt}
    \caption{Efficiency-oriented ablations on Qwen2.5-0.5B-Instruct. All scores are test-set accuracies (\%).}
    \label{tab:efficiency_stress}
    \small
    \begin{tabular}{l c c c c c c}
        \toprule[0.5mm]
        \textbf{Variant} & \textbf{Gradient Samples} & \textbf{GSM8K} & \textbf{SVAMP} & \textbf{MultiArith} & \textbf{GSM-Hard} & \textbf{Avg.} \\
        \midrule[0.3mm]
        GRPO+LoRA & 20{,}000 & 50.57 & 61.33 & \textbf{95.56} & 17.44 & 56.23 \\ \midrule
        \textbf{\textsc{Hyper-ES} - (7 gradients each direction)} & 17{,}920 & \textbf{51.18} & \textbf{65.33} & 93.89 & \textbf{18.12} & \textbf{57.13} \\
        \textbf{\textsc{Hyper-ES}} - (4 gradients each direction) & 10{,}240 & 49.81 & 60.67 & 95.00 & 17.36 & 55.71 \\ \midrule
        \textbf{\textsc{Hyper-ES}} - Pure-random directions $\Delta\theta$ & 0 & 47.01 & 48.33 & 77.78 & 17.21 & 47.58 \\
        \textbf{\textsc{Hyper-ES}} - CMA-ES directions $\Delta\theta$ & 0 & 45.19 & 57.00 & 87.22 & 14.48 & 50.97 \\
        \bottomrule[0.5mm]
    \end{tabular}
\end{table*}

\subsection{Performance on Mathematical Reasoning Capabilities}

To evaluate the generalizability of \textsc{Hyper-ES}, we conduct experiments on DeepSeek-R1-Distill-Qwen-1.5B across three mathematical reasoning benchmarks, including GSM8K, MATH-500, and AMC2023. As shown in Table~\ref{tab:deepseek}, \textsc{Hyper-ES} reaches an average score of 70.97\%, demonstrating performance advantages over both GRPO+LoRA (70.39\%) and CMA-ES+LoRA (69.63\%).

Specifically, \textsc{Hyper-ES} provides measurable gains on MATH-500, increasing the score to 76.97\% (compared to 75.48\% for GRPO+LoRA), and GSM8K, increasing the score to 76.49\% (compared to 76.07\% for GRPO+LoRA). On AMC2023, \textsc{Hyper-ES} remains competitive with GRPO+LoRA. These results demonstrate that searching over combinations of task-relevant descent directions generalizes well across diverse mathematical reasoning tasks. Synthesizing the results on 3 LLMs and 6 benchmarks, \textsc{Hyper-ES} is the most effective resource-constrained fine-tuning method for LLM Reasoning. More results, including multi-seed evaluation, additional baselines, code-generation evaluation, and detailed execution time, are provided in Appendix~\ref{app:additional_results}.

\section{Discussion and Analysis}
\label{sec:analysis}

\subsection{Ablation Study}\label{sec:ablation_setup}

To evaluate the contribution of each component, we perform ablations on Qwen2.5-0.5B-Instruct. Table~\ref{tab:ablation} uses the full \textsc{Hyper-ES}{} results as references.

\paragraph{\textit{w/o} CMA-ES.}
Replacing CMA-ES with the grid-only variant drops the average from 57.13\% to 55.92\% ($-1.21$). This shows that the coarse grid search can identify a useful global drop-rate and scale, but the layer-wise CMA-ES search still contributes by adapting the merge coefficients to different parameter groups.


\paragraph{\textit{w/o} Grouping.}
Replacing difficulty-aware grouping with random grouping causes the largest drop among these ablations, from 57.13\% to 53.62\% ($-3.51$). This supports the role of structured direction-pool construction.

\subsection{Discussion on Efficiency}
\label{sec:efficiency}

\paragraph{Memory and wall-clock efficiency.}
The main extra cost of \textsc{Hyper-ES} is constructing several short GRPO-induced directions. Since these runs are independent, the peak GPU memory is comparable to a single GRPO+LoRA run and does not grow with the number of directions. The later merging and CMA-ES stages use stored task vectors and validation evaluations, avoiding gradient-memory overhead. As shown in Appendix~\ref{sec:time_efficiency}, the direction runs can also be parallelized across GPU nodes without communication, reducing wall-clock time by 1 hour on Qwen-0.5B.

\paragraph{Training-sample budget.}
Table~\ref{tab:efficiency_stress} shows that fewer gradient steps are sufficient to construct useful directions. The full setting uses 7 updates per direction, totaling 17{,}920 samples, and reaches 57.13\% average accuracy. Reducing this to 4 updates uses only 10{,}240 samples while still achieving 55.71\%. Thus, \textsc{Hyper-ES} can also save time without parallelism by shortening direction construction.

\paragraph{Task-aligned directions are necessary.}
The gains are not from low-dimensional search alone. Random directions drop to 47.58\%, close to the base model, and CMA-ES without task-aligned directions also fails to match \textsc{Hyper-ES}. This supports our claim that the key is searching over meaningful GRPO-induced descent directions, not arbitrary subspaces.

\subsection{The Effectiveness of Hyper-ES Mainly due to the CMA-ES Stage}

\textsc{Hyper-ES} is effective because it searches over combinations of task-aligned directions. The complete GSM8K results in Appendix~\ref{app:full_eval} show that individual short LoRA directions are weak, but merging them is beneficial: on Qwen2.5-0.5B, grid-only \textsc{Hyper-ES} improves over GRPO+LoRA from 50.57\% to 50.72\%, and CMA-ES further improves to 51.18\%. On Qwen2.5-1.5B, the same comparison is 74.22\% $\rightarrow$ 75.21\% $\rightarrow$ 75.51\%.

This suggests that the gains mainly come from the search stage. The grid search finds a useful global merge configuration, while CMA-ES provides additional layer-wise coefficient adaptation.

\begin{figure}[t]
    \centering
    \begin{tikzpicture}
        \begin{axis}[
            width=0.48\textwidth,
            height=0.36\textwidth,
            xlabel={Samples with gradients for training},
            ylabel={Average accuracy (\%)},
            xmin=-1000, xmax=21200,
            ymin=52, ymax=60,
            grid=both,
            legend style={at={(0.02,0.98)},anchor=north west,font=\scriptsize},
            tick label style={font=\scriptsize},
            label style={font=\small},
            scaled x ticks=false,
            xtick={10840,18520,20600},
            xticklabels={10.24k,17.92k,20.6k},
        ]
            \addplot+[only marks, mark=triangle*, thick] coordinates {(0,52.76)};
            \addlegendentry{CMA-ES+LoRA}
            \addplot+[only marks, mark=o, thick] coordinates {(20000,56.23)};
            \addlegendentry{GRPO+LoRA}
            \addplot+[mark=square*, thick] coordinates {(10840,55.71) (18520,57.13)};
            \addlegendentry{\textsc{Hyper-ES}}
        \end{axis}
    \end{tikzpicture}
    \caption{Sample-efficiency curve on Qwen2.5-0.5B-Instruct arithmetic benchmarks. \textsc{Hyper-ES} achieves higher average accuracy than GRPO+LoRA with fewer training samples, while CMA-ES+LoRA remains lower.}
    \label{fig:grpo_efficiency_curve}
\end{figure}

\section{Conclusion}

This paper presents \textsc{Hyper-ES}, an ES framework that searches over descent direction merging hyperparameters instead of directly perturbing LLM weights. The method addresses the high-dimensional random-walk failure mode of direct ES by constructing a compact, task-aligned search space from brief direction-generation updates. Empirically, \textsc{Hyper-ES} improves over average merging and grid-only merging and slightly improves over single-stage GRPO on the reported Qwen2.5 arithmetic benchmarks and DeepSeek-R1-Distill-Qwen-1.5B hard math benchmarks. The results suggest that short gradient-based runs can be used not only to train models directly, but also to define useful low-dimensional spaces for later derivative-free ES optimization.
\section*{Limitations}
\label{sec:limitations}

This paper only considers GRPO as an effective method for fine-tuning large models for reasoning. Considering OPD-based methods and SFT on certain specific datasets are also suitable as initial gradients. In the future, we will consider further demonstrating whether the effectiveness of Hyper-ES is limited to GRPO. 

We will also consider online dynamic methods as future work, obtaining the best direction through model iteration and performing ES search.


\bibliography{custom}

@article{ilharco2022editing,
  title={Editing models with task arithmetic},
  author={Ilharco, Gabriel and Ribeiro, Marco Tulio and Wortsman, Mitchell and Gururangan, Suchin and Schmidt, Ludwig and Hajishirzi, Hannaneh and Farhadi, Ali},
  journal={arXiv preprint arXiv:2212.04089},
  year={2022}
}

@article{zheng2026beyond,
  title={Beyond Imitation: Reinforcement Learning for Active Latent Planning},
  author={Zheng, Zhi and Lee, Wee Sun},
  journal={arXiv preprint arXiv:2601.21598},
  year={2026}
}

@article{zheng2025soft,
  title={Soft-grpo: Surpassing discrete-token llm reinforcement learning via gumbel-reparameterized soft-thinking policy optimization},
  author={Zheng, Zhi and Gu, Yu and Liu, Wei and Teh, Yee Whye and Lee, Wee Sun},
  journal={arXiv preprint arXiv:2511.06411},
  year={2025}
}

@article{yadav2023tiesmerging,
  title={Ties-merging: Resolving interference when merging models},
  author={Yadav, Prateek and Tam, Derek and Choshen, Leshem and Raffel, Colin A and Bansal, Mohit},
  journal={Advances in neural information processing systems},
  volume={36},
  pages={7093--7115},
  year={2023}
}

@article{liu2025understanding,
  title={Understanding r1-zero-like training: A critical perspective},
  author={Liu, Zichen and Chen, Changyu and Li, Wenjun and Qi, Penghui and Pang, Tianyu and Du, Chao and Lee, Wee Sun and Lin, Min},
  journal={arXiv preprint arXiv:2503.20783},
  year={2025}
}

@article{liu2025ea4llm,
  title={EA4LLM: A Gradient-Free Approach to Large Language Model Optimization via Evolutionary Algorithms},
  author={Liu, WenTao and Song, Siyu and Hao, Hao and Zhou, Aimin},
  journal={arXiv preprint arXiv:2510.10603},
  year={2025}
}

@article{gan2026neural,
  title={Neural thickets: Diverse task experts are dense around pretrained weights},
  author={Gan, Yulu and Isola, Phillip},
  journal={arXiv preprint arXiv:2603.12228},
  year={2026}
}

@article{fu2026reasoning,
  title={Reasoning Resides in Layers: Restoring Temporal Reasoning in Video-Language Models with Layer-Selective Merging},
  author={Fu, Zihang and Wang, Haonan and Kang, Jian and Kawaguchi, Kenji and Wu, Jiaying},
  journal={arXiv preprint arXiv:2604.11399},
  year={2026}
}

@article{li2025system,
  title={From system 1 to system 2: A survey of reasoning large language models},
  author={Li, Zhong-Zhi and Zhang, Duzhen and Zhang, Ming-Liang and Zhang, Jiaxin and Liu, Zengyan and Yao, Yuxuan and Xu, Haotian and Zheng, Junhao and Wang, Pei-Jie and Chen, Xiuyi and others},
  journal={arXiv preprint arXiv:2502.17419},
  year={2025}
}

@inproceedings{yu2024dare,
  title={Language models are super mario: Absorbing abilities from homologous models as a free lunch},
  author={Yu, Le and Yu, Bowen and Yu, Haiyang and Huang, Fei and Li, Yongbin},
  booktitle={Forty-first International Conference on Machine Learning},
  year={2024}
}

@article{akiba2025evolutionary,
  title={Evolutionary optimization of model merging recipes},
  author={Akiba, Takuya and Shing, Makoto and Tang, Yujin and Sun, Qi and Ha, David},
  journal={Nature Machine Intelligence},
  volume={7},
  number={2},
  pages={195--204},
  year={2025},
  publisher={Nature Publishing Group UK London}
}

@article{qiu2025evolution,
  title={Evolution strategies at scale: Llm fine-tuning beyond reinforcement learning},
  author={Qiu, Xin and Gan, Yulu and Hayes, Conor F and Liang, Qiyao and Xu, Yinggan and Dailey, Roberto and Meyerson, Elliot and Hodjat, Babak and Miikkulainen, Risto},
  journal={arXiv preprint arXiv:2509.24372},
  year={2025}
}

@article{shao2024deepseekmath,
  title={Deepseekmath: Pushing the limits of mathematical reasoning in open language models},
  author={Shao, Zhihong and Wang, Peiyi and Zhu, Qihao and Xu, Runxin and Song, Junxiao and Bi, Xiao and Zhang, Haowei and Zhang, Mingchuan and Li, YK and Wu, Yang and others},
  journal={arXiv preprint arXiv:2402.03300},
  year={2024}
}

@article{cobbe2021gsm8k,
  title={Training verifiers to solve math word problems},
  author={Cobbe, Karl and Kosaraju, Vineet and Bavarian, Mohammad and Chen, Mark and Jun, Heewoo and Kaiser, Lukasz and Plappert, Matthias and Tworek, Jerry and Hilton, Jacob and Nakano, Reiichiro and others},
  journal={arXiv preprint arXiv:2110.14168},
  year={2021}
}

@misc{qwen2025qwen25,
      title={Qwen2.5 Technical Report}, 
      author={Qwen and : and An Yang and Baosong Yang and Beichen Zhang and Binyuan Hui and Bo Zheng and Bowen Yu and Chengyuan Li and Dayiheng Liu and Fei Huang and Haoran Wei and Huan Lin and Jian Yang and Jianhong Tu and Jianwei Zhang and Jianxin Yang and Jiaxi Yang and Jingren Zhou and Junyang Lin and Kai Dang and Keming Lu and Keqin Bao and Kexin Yang and Le Yu and Mei Li and Mingfeng Xue and Pei Zhang and Qin Zhu and Rui Men and Runji Lin and Tianhao Li and Tianyi Tang and Tingyu Xia and Xingzhang Ren and Xuancheng Ren and Yang Fan and Yang Su and Yichang Zhang and Yu Wan and Yuqiong Liu and Zeyu Cui and Zhenru Zhang and Zihan Qiu},
      year={2025},
      eprint={2412.15115},
      archivePrefix={arXiv},
      primaryClass={cs.CL},
      url={https://arxiv.org/abs/2412.15115}, 
}

@misc{Kydlicek_Math-Verify_Math_Verification,
author = {Kydlíček, Hynek},
license = {Apache-2.0},
title = {{Math-Verify: Math Verification Library}},
url = {https://github.com/huggingface/math-verify},
version = {0.6.1},
  year={2025}
}

@article{hoy2026matching,
  title={Matching Accuracy, Different Geometry: Evolution Strategies vs GRPO in LLM Post-Training},
  author={Hoy, William and Wang, Binxu and Pan, Xu},
  journal={arXiv preprint arXiv:2604.01499},
  year={2026}
}

@article{guo2025deepseek,
  title={Deepseek-r1: Incentivizing reasoning capability in llms via reinforcement learning},
  author={Guo, Daya and Yang, Dejian and Zhang, Haowei and Song, Junxiao and Wang, Peiyi and Zhu, Qihao and Xu, Runxin and Zhang, Ruoyu and Ma, Shirong and Bi, Xiao and others},
  journal={arXiv preprint arXiv:2501.12948},
  year={2025}
}

@article{2025deepscaler,
  title={Deepscaler: Surpassing o1-preview with a 1.5 b model by scaling rl},
  author={Luo, Michael and Tan, Sijun and Wong, Justin and Shi, Xiaoxiang and Tang, William Y and Roongta, Manan and Cai, Colin and Luo, Jeffrey and Zhang, Tianjun and Li, Li Erran and others},
  journal={Notion Blog},
  volume={3},
  number={5},
  year={2025}
}

@article{liu2025part,
  title={Part i: Tricks or traps? a deep dive into rl for llm reasoning},
  author={Liu, Zihe and Liu, Jiashun and He, Yancheng and Wang, Weixun and Liu, Jiaheng and Pan, Ling and Hu, Xinyu and Xiong, Shaopan and Huang, Ju and Hu, Jian and others},
  journal={arXiv preprint arXiv:2508.08221},
  year={2025}
}

@article{yu2025dapo,
  title={Dapo: An open-source llm reinforcement learning system at scale, 2025},
  author={Yu, Qiying and Zhang, Zheng and Zhu, Ruofei and Yuan, Yufeng and Zuo, Xiaochen and Yue, Yu and Dai, Weinan and Fan, Tiantian and Liu, Gaohong and Liu, Lingjun and others},
  journal={URL https://arxiv. org/abs/2503.14476},
  volume={1},
  pages={2},
  year={2025}
}

@article{tajwar2026maximum,
  title={Maximum Likelihood Reinforcement Learning},
  author={Tajwar, Fahim and Zeng, Guanning and Zhou, Yueer and Song, Yuda and Arora, Daman and Jiang, Yiding and Schneider, Jeff and Salakhutdinov, Ruslan and Feng, Haiwen and Zanette, Andrea},
  journal={arXiv preprint arXiv:2602.02710},
  year={2026}
}

@article{yin2025evaluating,
  title={Evaluating Parameter Efficient Methods for RLVR},
  author={Yin, Qingyu and Wu, Yulun and Shen, Zhennan and Li, Sunbowen and Wang, Zhilin and Li, Yanshu and Leong, Chak Tou and Kang, Jiale and Gu, Jinjin},
  journal={arXiv preprint arXiv:2512.23165},
  year={2025}
}

@article{chen2025bring,
  title={Bring reason to vision: Understanding perception and reasoning through model merging},
  author={Chen, Shiqi and Zhang, Jinghan and Zhu, Tongyao and Liu, Wei and Gao, Siyang and Xiong, Miao and Li, Manling and He, Junxian},
  journal={arXiv preprint arXiv:2505.05464},
  year={2025}
}

@article{yin2025towards,
  title={Towards Efficient Multimodal Unified Reasoning Model via Model Merging},
  author={Yin, Qixiang and Yao, Huanjin and Chen, Jianghao and Huang, Jiaxing and Zhao, Zhicheng and Su, Fei},
  journal={arXiv preprint arXiv:2510.08987},
  year={2025}
}

@article{abdi2026evolutionary,
  title={Evolutionary Strategies lead to Catastrophic Forgetting in LLMs},
  author={Abdi, Immanuel and Gupta, Akshat and Mok, Micah and Lu, Alexander and Lee, Nicholas and Anumanchipalli, Gopala},
  journal={arXiv preprint arXiv:2601.20861},
  year={2026}
}

@article{salimans2017evolution,
  title={Evolution strategies as a scalable alternative to reinforcement learning},
  author={Salimans, Tim and Ho, Jonathan and Chen, Xi and Sidor, Szymon and Sutskever, Ilya},
  journal={arXiv preprint arXiv:1703.03864},
  year={2017}
}

@article{sun2026essam,
  title={ESSAM: A Novel Competitive Evolution Strategies Approach to Reinforcement Learning for Memory Efficient LLMs Fine-Tuning},
  author={Sun, Zhishen and Dang, Sizhe and Dai, Guang and Ye, Haishan},
  journal={arXiv preprint arXiv:2602.01003},
  year={2026}
}

@article{sarkar2025evolution,
  title={Evolution strategies at the hyperscale},
  author={Sarkar, Bidipta and Fellows, Mattie and Duque, Juan Agustin and Letcher, Alistair and Villares, Antonio Le{\'o}n and Sims, Anya and Wibault, Clarisse and Samsonov, Dmitry and Cope, Dylan and Liesen, Jarek and others},
  journal={arXiv preprint arXiv:2511.16652},
  year={2025}
}

@article{wei2022chain,
  title={Chain-of-thought prompting elicits reasoning in large language models},
  author={Wei, Jason and Wang, Xuezhi and Schuurmans, Dale and Bosma, Maarten and Xia, Fei and Chi, Ed and Le, Quoc V and Zhou, Denny and others},
  journal={Advances in neural information processing systems},
  volume={35},
  pages={24824--24837},
  year={2022}
}

@article{liu2025efficient,
  title={Efficient inference for large reasoning models: A survey},
  author={Liu, Yue and Wu, Jiaying and He, Yufei and Gong, Ruihan and Xia, Jun and Li, Liang and Gao, Hongcheng and Chen, Hongyu and Bi, Baolong and Zhang, Jiaheng and others},
  journal={arXiv preprint arXiv:2503.23077},
  year={2025}
}

@article{zheng2025reasoning,
  title={Reasoning-CV: Fine-tuning Powerful Reasoning LLMs for Knowledge-Assisted Claim Verification},
  author={Zheng, Zhi and Lee, Wee Sun},
  journal={arXiv preprint arXiv:2505.12348},
  year={2025}
}

@article{park2025mobilerag,
  title={MobileRAG: A Fast, Memory-Efficient, and Energy-Efficient Method for On-Device RAG},
  author={Park, Taehwan and Lee, Geonho and Kim, Min-Soo},
  journal={arXiv preprint arXiv:2507.01079},
  year={2025}
}

@inproceedings{lightman2024verify,
  title={Let's verify step by step},
  author={Lightman, Hunter and Kosaraju, Vineet and Burda, Yuri and Edwards, Harrison and Baker, Bowen and Lee, Teddy and Leike, Jan and Schulman, John and Sutskever, Ilya and Cobbe, Karl},
  booktitle={International Conference on Learning Representations},
  volume={2024},
  pages={39578--39601},
  year={2024}
}

@inproceedings{patel2021nlp,
  title={Are NLP models really able to solve simple math word problems?},
  author={Patel, Arkil and Bhattamishra, Satwik and Goyal, Navin},
  booktitle={Proceedings of the 2021 conference of the North American chapter of the association for computational linguistics: human language technologies},
  pages={2080--2094},
  year={2021}
}

@inproceedings{roy2015solving,
  title={Solving general arithmetic word problems},
  author={Roy, Subhro and Roth, Dan},
  booktitle={Proceedings of the 2015 conference on empirical methods in natural language processing},
  pages={1743--1752},
  year={2015}
}

@inproceedings{gao2023pal,
  title={Pal: Program-aided language models},
  author={Gao, Luyu and Madaan, Aman and Zhou, Shuyan and Alon, Uri and Liu, Pengfei and Yang, Yiming and Callan, Jamie and Neubig, Graham},
  booktitle={International conference on machine learning},
  pages={10764--10799},
  year={2023},
  organization={PMLR}
}

@article{hansen2001cmaes,
  title={Completely derandomized self-adaptation in evolution strategies},
  author={Hansen, Nikolaus and Ostermeier, Andreas},
  journal={Evolutionary computation},
  volume={9},
  number={2},
  pages={159--195},
  year={2001},
  publisher={MIT Press}
}

@article{ma2025ero,
  title={Evolutionary System 2 Reasoning: An Empirical Proof},
  author={Ma, Zeyuan and Huang, Wenqi and Song, Guo-Huan and Guo, Hongshu and Ma, Sijie and Cao, Zhiguang and Gong, Yue-Jiao},
  journal={arXiv preprint arXiv:2512.05760},
  year={2025}
}

@article{korotyshova2025essa,
  title={ESSA: Evolutionary Strategies for Scalable Alignment},
  author={Korotyshova, Daria and Shaposhnikov, Boris and Malakhov, Alexey and Khokhulin, Alexey and Surnachev, Nikita and Ovcharenko, Kirill and Bredis, George and Gorbatovski, Alexey and Sinii, Viacheslav and Gavrilov, Daniil},
  journal={arXiv preprint arXiv:2507.04453},
  year={2025}
}

@article{yang2026model,
  title={Model merging in llms, mllms, and beyond: Methods, theories, applications, and opportunities},
  author={Yang, Enneng and Shen, Li and Guo, Guibing and Wang, Xingwei and Cao, Xiaochun and Zhang, Jie and Tao, Dacheng},
  journal={ACM Computing Surveys},
  volume={58},
  number={8},
  pages={1--41},
  year={2026},
  publisher={ACM New York, NY}
}

@article{huang2023lorahub,
  title={Lorahub: Efficient cross-task generalization via dynamic lora composition},
  author={Huang, Chengsong and Liu, Qian and Lin, Bill Yuchen and Pang, Tianyu and Du, Chao and Lin, Min},
  journal={arXiv preprint arXiv:2307.13269},
  year={2023}
}

@article{deng2024explicit,
  title={From explicit cot to implicit cot: Learning to internalize cot step by step},
  author={Deng, Yuntian and Choi, Yejin and Shieber, Stuart},
  journal={arXiv preprint arXiv:2405.14838},
  year={2024}
}

@article{yang2025cabs,
  title={Cabs: Conflict-aware and balanced sparsification for enhancing model merging},
  author={Yang, Zongzhen and Qi, Binhang and Sun, Hailong and Long, Wenrui and Zhao, Ruobing and Gao, Xiang},
  journal={arXiv preprint arXiv:2503.01874},
  year={2025}
}

@article{cheng2025whoever,
  title={Whoever started the interference should end it: Guiding data-free model merging via task vectors},
  author={Cheng, Runxi and Xiong, Feng and Wei, Yongxian and Zhu, Wanyun and Yuan, Chun},
  journal={arXiv preprint arXiv:2503.08099},
  year={2025}
}

@article{austin2021program,
  title={Program synthesis with large language models},
  author={Austin, Jacob and Odena, Augustus and Nye, Maxwell and Bosma, Maarten and Michalewski, Henryk and Dohan, David and Jiang, Ellen and Cai, Carrie and Terry, Michael and Le, Quoc and others},
  journal={arXiv preprint arXiv:2108.07732},
  year={2021}
}

@misc{openthought2025tinygrpo,
  author={{Open-Thought}},
  license={Apache-2.0},
  title={{tiny-grpo: Minimal Hackable GRPO Implementation}},
  url={https://github.com/open-thought/tiny-grpo},
  year={2025}
}

@article{chen2021evaluating,
  title={Evaluating large language models trained on code},
  author={Chen, Mark and Tworek, Jerry and Jun, Heewoo and Yuan, Qiming and Pinto, Henrique Ponde De Oliveira and Kaplan, Jared and Edwards, Harri and Burda, Yuri and Joseph, Nicholas and Brockman, Greg and others},
  journal={arXiv preprint arXiv:2107.03374},
  year={2021}
}

\appendix
\newpage
\clearpage
\section{Related Work}
\label{sec:related_work}

\paragraph{Evolution Strategies for LLM Fine-Tuning.}
As advanced LLMs are equipped with larger and larger parameters, fine-tuning them in devices with moderate scales becomes unaffordable, and ES have emerged as efficient gradient-free optimizers for LLMs. \citet{qiu2025evolution} demonstrated that ES can scale to full-parameter LLM fine-tuning, matching or exceeding GRPO in sample efficiency and training stability. This momentum has driven ES-based methods into pre-training \cite{liu2025ea4llm}, System~2 reasoning~\citep{ma2025ero}, few-shot adaptation \cite{gan2026neural,korotyshova2025essa}, and memory-efficient tuning via sharpness-aware mechanisms~\citep{sun2026essam}. However, directly applying ES to the full LLM parameter space induces an isotropic random walk in the high-dimensional weight space, simultaneously slowing convergence and causing catastrophic forgetting of the pre-trained manifold~\citep{abdi2026evolutionary}. \textsc{Hyper-ES} resolves this fundamental limitation by restricting CMA-ES to the low-dimensional manifold of GRPO-derived task-vector merging coefficients, replacing unconstrained weight perturbations with task-aligned descent directions.

\paragraph{Reinforcement Learning for LLM Reasoning.}
GRPO~\citep{shao2024deepseekmath,guo2025deepseek} and its successors \cite{liu2025understanding,yu2025dapo,liu2025part,tajwar2026maximum,zheng2025soft,zheng2026beyond} have become the standard RLVR approach for reasoning, leveraging group-relative verifiable rewards to discover effective descent directions in the loss landscape. The standard RLVR pipeline, however, requires simultaneously maintaining actor, reference, and reward models in GPU memory, creating prohibitive overhead for resource-constrained deployments. \textsc{Hyper-ES} reframes GRPO as a \emph{direction provider}: short GRPO runs expose task-specific descent directions as LoRA task vectors, which are reused as the search basis for a subsequent memory-light CMA-ES stage. As shown in \citet{yin2025evaluating}, GRPO+LoRA can still demonstrate acceptable performance.

\paragraph{Model Merging in LLM.}
Model merging composes capabilities by linearly manipulating stored task vectors~\citep{yang2026model,chen2025bring,yin2025towards}. Task Arithmetic~\citep{ilharco2022editing} establishes the additive framework; DARE~\citep{yu2024dare} reduces interference via random sparsification, and TIES-Merging~\citep{yadav2023tiesmerging} via sign consensus. To further mitigate conflicts, CABS~\citep{yang2025cabs} introduces conflict-aware and balanced sparsification to eliminate parameter overlap, while WUDI-Merging~\citep{cheng2025whoever} guides model merging by modeling task vectors as approximate linear subspaces of linear layer inputs to eliminate interference. Some previous work applied evolutionary black-box search to post-trained weight merging (\citet{akiba2025evolutionary,fu2026reasoning,huang2023lorahub}) and used CMA-ES to optimize layer-wise DARE/TIES parameters to fuse models fine-tuned for different specializations of the same base, thus merging LoRA adapters across multi-task tasks. Unlike these approaches, Hyper-ES considers providing different feasible coarse directions for the same task and model merging is used to implement the ES search rather than to integrate multiple capabilities. Furthermore, Hyper-ES pioneers a paradigm shift: instead of merging different pre-trained specialized models, it reuses DARE-TIES parameterization as a generalized dimensionality-reduced manifold with arbitrary parameter increments ($\Delta W$). This elevates model merging from a post-trained merging tool to a core optimization mechanism—directly replacing full-space reinforcement learning updates and fundamentally solving the high-dimensional ES random walk problem.

\section{Prompt}
\label{app:prompts}

We use the same prompt format during rollout generation and evaluation.

\begin{dialogbox}{Prompt for Mathematical Reasoning.}
    \textcolor{ForestGreen}{\textbf{system}}

    Let's think step by step and output the final answer after
    ``\texttt{\#\#\#\#}''.

    \textcolor{RoyalBlue}{\textbf{user}}

    \textcolor{RoyalBlue}{\{Question\}}
\end{dialogbox}

\paragraph{Reward Function}
Rewards are binary: 1.0 for a correct answer and 0.0 otherwise. Answer extraction follows a priority order: (1) \texttt{\#\#\#\# <number>}; (2) \verb|\boxed{...}|; (3) natural-language patterns such as ``the answer is $X$''; and (4) the last number in the response as a fallback. Answers are normalized by stripping commas and casting whole-valued floats to integers before comparison.


\newpage

\section{Detailed Proof}
\label{app:theory}

\subsection{Proofs for the ES Random-Walk Lemmas}
\label{app:random_walk_proofs}

\begin{proof}[Proof of Lemma~\ref{lem:near_orthogonal}]
    By rotational symmetry of the unit sphere, we may set $g^*/\|g^*\|=e_1$ without loss of generality. For a uniformly sampled unit vector $u\in\mathbb{R}^d$, standard concentration on the sphere gives
    \begin{equation}
        \Pr\left(|\langle u,e_1\rangle|\geq \tau\right)
        \leq
        2\exp\!\left(-\frac{d\tau^2}{2}\right).
    \end{equation}
    Applying a union bound over $G$ independently sampled perturbation directions $u_1,\ldots,u_G$ yields
    \begin{equation}
        \Pr\!\left(
        \max_{1\leq k\leq G}|\langle u_k,e_1\rangle|
        \geq \tau
        \right)
        \leq
        2G\exp\!\left(-\frac{d\tau^2}{2}\right).
    \end{equation}
    For any fixed $\tau>0$, if $\log G=o(d)$, then $2G\exp(-d\tau^2/2)\to 0$. Hence
    \begin{equation}
        \max_{1\leq k\leq G}|\langle u_k,e_1\rangle|
        \xrightarrow[d\to\infty]{p}
        0.
    \end{equation}
    Since $\alpha_k=\arccos(|\langle u_k,e_1\rangle|)$ and $\arccos(\cdot)$ is continuous at $0$, this is equivalent to $\min_k\alpha_k\xrightarrow{p}\pi/2$.
\end{proof}

\begin{proof}[Proof of Lemma~\ref{lem:random_walk}]
    Write the ES update as the sum of a useful component parallel to $g^*$ and an orthogonal component $r_t$. When the useful component is negligible, the parameter displacement after $T$ steps is dominated by
    \begin{equation}
        R_T=\sum_{t=0}^{T-1}\eta_t r_t.
    \end{equation}
    Assuming the orthogonal components have mean-zero cross terms across iterations, $\mathbb{E}\langle r_t,r_s\rangle\approx 0$ for $t\neq s$, we obtain
    \begin{equation}
        \begin{aligned}
            \mathbb{E}\|R_T\|_2^2
            &=
            \mathbb{E}
            \left\|
            \sum_{t=0}^{T-1}\eta_t r_t
            \right\|_2^2 \\
            &=
            \sum_{t=0}^{T-1}\eta_t^2\mathbb{E}\|r_t\|_2^2
            +2\sum_{s<t}\eta_s\eta_t\mathbb{E}\langle r_s,r_t\rangle \\
            &\approx
            \rho^2\sum_{t=0}^{T-1}\eta_t^2.
        \end{aligned}
    \end{equation}
    Thus the orthogonal updates do not cancel in squared norm; they accumulate as random-walk drift in parameter space.
\end{proof}

\begin{proposition}[Norm bound for merged updates]
    Let $\overline{\Delta\theta}^{(\ell)}$ denote the merged update for group $\ell$. If DARE sparsification is applied without rescaling and the TIES disjoint merge uses non-negative mixing weights, then
    \begin{equation}
        \left\|\gamma\overline{\Delta\theta}^{(\ell)}\right\|_F
        \leq
        \gamma\max_i\left\|\Delta\theta_i^{(\ell)}\right\|_F.
    \end{equation}
\end{proposition}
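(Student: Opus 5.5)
The plan is to prove the bound entrywise and then sum, using the standard weighted DARE--TIES disjoint merge. For each direction $i$, DARE without rescaling produces $\widetilde{\Delta\theta}_i^{(\ell)} = m_i\odot\Delta\theta_i^{(\ell)}$ with a binary mask $m_i\in\{0,1\}^{\dim}$, so $|\widetilde{\Delta\theta}_{i,j}^{(\ell)}|\le|\Delta\theta_{i,j}^{(\ell)}|$ at every coordinate $j$. TIES then fixes an elected sign $s_j$ per coordinate and defines the agreeing set $S_j=\{i:\operatorname{sign}(\widetilde{\Delta\theta}_{i,j}^{(\ell)})=s_j,\ \widetilde{\Delta\theta}_{i,j}^{(\ell)}\neq 0\}$. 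The weighted disjoint merge is
\begin{equation}
\overline{\Delta\theta}^{(\ell)}_j=\frac{\sum_{i\in S_j}\omega_i^{(\ell)}\widetilde{\Delta\theta}_{i,j}^{(\ell)}}{\sum_{i\in S_j}\omega_i^{(\ell)}},
\end{equation}
with $\overline{\Delta\theta}^{(\ell)}_j=0$ when $S_j=\emptyset$. I will take this normalized form from Appendix~\ref{app:merging_operator}. It is the one assumption that must match the paper, since an unnormalized weighted sum would make the bound false unless $\sum_i\omega_i^{(\ell)}\le1$.

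\textbf{Coordinatewise convexity.} Because $\omega_i^{(\ell)}>0$, the merged entry is a convex combination of the values $\widetilde{\Delta\theta}_{i,j}^{(\ell)}$ for $i\in S_j$. All of these share sign $s_j$, so
\begin{equation}
|\overline{\Delta\theta}^{(\ell)}_j|\le\sum_{i\in S_j}\lambda_{ij}|\Delta\theta_{i,j}^{(\ell)}|,\qquad \lambda_{ij}=\frac{\omega_i^{(\ell)}}{\sum_{k\in S_j}\omega_k^{(\ell)}},\quad \sum_{i}\lambda_{ij}=1 .
\end{equation}
This inequality also holds when $S_j=\emptyset$, where the left side is zero.

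\textbf{Passing to the Frobenius norm.} This is the main obstacle. The weights $\lambda_{ij}$ vary with $j$, so $\overline{\Delta\theta}^{(\ell)}$ is not a single convex combination of the $\Delta\theta_i^{(\ell)}$, and the triangle inequality cannot be applied directly. I would first apply Jensen coordinatewise:
\begin{equation}
|\overline{\Delta\theta}^{(\ell)}_j|^2\le\sum_i\lambda_{ij}|\Delta\theta_{i,j}^{(\ell)}|^2 .
\end{equation}
Summing over $j$ then gives $\sum_j\sum_i\lambda_{ij}|\Delta\theta_{i,j}^{(\ell)}|^2$.

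This quantity is not obviously at most $\max_i\|\Delta\theta_i^{(\ell)}\|_F^2$, because a direction-dependent selection of the largest entry in each coordinate can exceed every individual norm. The argument therefore cannot close through a fully coordinatewise selection. I would try two routes.

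The first route is to use that each $\lambda_{ij}$ equals $\omega_i$ renormalized over the agreeing set. The fallback bound is $\|\overline{\Delta\theta}^{(\ell)}\|_F^2\le\sum_i\|\Delta\theta_i^{(\ell)}\|_F^2$, which follows because $\lambda_{ij}\le1$.

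The second route, if the appendix defines the disjoint merge with global weights normalized by $\sum_i\omega_i^{(\ell)}$ rather than per-coordinate weights, gives $\lambda_{ij}\le\lambda_i$ with $\sum_i\lambda_i=1$. Then
\begin{equation}
\|\overline{\Delta\theta}^{(\ell)}\|_F^2\le\sum_i\lambda_i\|\Delta\theta_i^{(\ell)}\|_F^2\le\max_i\|\Delta\theta_i^{(\ell)}\|_F^2 .
\end{equation}
Multiplying by $\gamma>0$ would finish the proof. Which route succeeds depends on the exact normalization in Appendix~\ref{app:merging_operator}. I expect the paper intends the second reading, in which case the key is simply to check that the per-coordinate weights never exceed the globally normalized ones.
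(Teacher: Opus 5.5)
Your proposal does not establish the stated bound, and the step you hope will close it fails. The disjoint merge in \eqref{eq:disjoint_merge_app} is normalized per coordinate by $\sum_i\omega_i^{(\ell)}\mathbf{1}[\mathrm{sgn}(\widetilde{\Delta\theta}_{i,j}^{(\ell)})=\hat{s}_j^{(\ell)}]$. That is exactly the $\lambda_{ij}$ form you wrote down first, not the globally normalized form your second route needs. The check you propose at the end also goes the wrong way: for $i\in S_j$, $\omega_i^{(\ell)}/\sum_{k\in S_j}\omega_k^{(\ell)}\ge\omega_i^{(\ell)}/\sum_{k}\omega_k^{(\ell)}$, strictly whenever $S_j$ omits some direction. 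Your first route gives only $\|\overline{\Delta\theta}^{(\ell)}\|_F^2\le\sum_i\|\Delta\theta_i^{(\ell)}\|_F^2$, which is weaker by up to a factor $\sqrt N$ in norm.

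However, no one can close this gap, because the obstacle you isolated makes the proposition false for the paper's operator. The obstacle is that the direction supplying the large entry can change from coordinate to coordinate. The paper's proof sketch makes precisely the leap you declined to make: it passes from the coordinatewise bound $|\overline{\Delta\theta}_j^{(\ell)}|\le\max_i|\widetilde{\Delta\theta}_{i,j}^{(\ell)}|$ to ``bounded by the largest sparsified task-vector norm.'' For a counterexample, take $N=2$, $\Delta\theta_1^{(\ell)}=E_{11}$, $\Delta\theta_2^{(\ell)}=E_{22}$. These are rank-one matrix units, so they are valid LoRA deltas. Use any positive weights, and condition on the positive-probability event that the DARE masks keep every nonzero entry. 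At entry $(1,1)$ the elected sign is $+$ and only direction $1$ agrees, so the merged entry is $1$. The same holds at $(2,2)$, and the off-diagonal entries are $0$. Hence $\overline{\Delta\theta}^{(\ell)}=E_{11}+E_{22}$ and $\|\gamma\overline{\Delta\theta}^{(\ell)}\|_F=\gamma\sqrt2>\gamma=\gamma\max_i\|\Delta\theta_i^{(\ell)}\|_F$. Using $E_{11}-\varepsilon E_{22}$ and $E_{22}-\varepsilon E_{11}$ with small $\varepsilon>0$ gives the same merged update without relying on the convention $\mathrm{sgn}(0)=0$. With $N$ matrix units the ratio becomes $\sqrt N$.

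What is actually true for the paper's operator is your fallback: $\|\gamma\overline{\Delta\theta}^{(\ell)}\|_F\le\gamma\bigl(\sum_i\|\Delta\theta_i^{(\ell)}\|_F^2\bigr)^{1/2}\le\gamma\sqrt N\max_i\|\Delta\theta_i^{(\ell)}\|_F$, and the example shows this is tight. The $\max_i$ bound holds only for the globally normalized merge, and for that operator your Cauchy--Schwarz argument in the second route is correct. So your proposal should conclude with this counterexample and a corrected statement, rather than expecting the paper's definition to rescue the original claim.
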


\begin{proof}[Proof sketch]
    Sparsification cannot increase the Frobenius norm. The TIES disjoint merge computes, coordinate-wise, a weighted average over sign-agreeing entries with non-negative weights. Thus the unscaled merged update is bounded by the largest sparsified task-vector norm. Multiplication by $\gamma$ gives the result.
\end{proof}

\begin{proposition}[Alignment in the task-vector subspace]
    Let $\mathcal{S}=\mathrm{span}\{\Delta\theta_1,\ldots,\Delta\theta_N\}$ and let $g^*$ be a target descent direction. If the projection of $g^*$ onto $\mathcal{S}$ has cosine $c>0$, then an isotropic perturbation in $\mathcal{S}$ has expected alignment $\Theta(c/\sqrt{\dim\mathcal{S}})$ with $g^*$, whereas an isotropic perturbation in the full space has expected alignment $O(1/\sqrt{d})$.
\end{proposition}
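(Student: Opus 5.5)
The plan is to reduce both claims to the same computation: the expected absolute inner product between a uniformly random unit vector in a $k$-dimensional space and a fixed unit vector lying in that space. Write $\hat g=g^*/\|g^*\|$, let $P_{\mathcal S}$ be the orthogonal projector onto $\mathcal S$, and set $k=\dim\mathcal S\le N$. By hypothesis, $\|P_{\mathcal S}\hat g\|=c$. Equivalently, $c$ is the cosine between $\hat g$ and its projection $P_{\mathcal S}\hat g/\|P_{\mathcal S}\hat g\|$, and since $\hat g$ has unit norm this cosine equals $\|P_{\mathcal S}\hat g\|$. I interpret ``isotropic perturbation in $\mathcal S$'' as $u=Q\xi/\|\xi\|$ with $\xi\sim\mathcal N(0,I_k)$ and $Q\in\mathbb R^{d\times k}$ an orthonormal basis of $\mathcal S$. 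I interpret ``alignment'' as $\mathbb E|\langle u,\hat g\rangle|$, the quantity whose maximum is controlled in Lemma~\ref{lem:near_orthogonal}.

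\textbf{Step 1 (subspace case).} Because $u\in\mathcal S$, we have $\langle u,\hat g\rangle=\langle u,P_{\mathcal S}\hat g\rangle=c\,\langle u,v\rangle$, where $v=P_{\mathcal S}\hat g/c$ is a unit vector in $\mathcal S$. The rotational-symmetry reduction from the proof of Lemma~\ref{lem:near_orthogonal}, applied inside $\mathcal S\cong\mathbb R^k$, lets us take $v=e_1$. This gives $\mathbb E|\langle u,\hat g\rangle|=c\,m_k$, where $m_k:=\mathbb E|\xi_1|/\|\xi\|$.

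\textbf{Step 2 (the key estimate $m_k=\Theta(1/\sqrt k)$).} For the upper bound, apply Cauchy--Schwarz and symmetry of the coordinates: $m_k\le(\mathbb E\,\xi_1^2/\|\xi\|^2)^{1/2}=1/\sqrt k$. For the lower bound there are two routes. The first uses the fact that $\xi/\|\xi\|$ is independent of $\|\xi\|$. Since $|\xi_1|=\|\xi\|\cdot|\xi_1|/\|\xi\|$, this gives $\mathbb E|\xi_1|=\mathbb E\|\xi\|\cdot m_k$, so $m_k=\sqrt{2/\pi}/\mathbb E\|\xi\|\ge\sqrt{2/\pi}/\sqrt k$ by Jensen. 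The second, as a fallback, uses the exact Beta/Gamma-function formula for the first coordinate of a uniform point on the sphere. Both routes yield $\sqrt{2/\pi}/\sqrt k\le m_k\le 1/\sqrt k$, so the subspace alignment is $\Theta(c/\sqrt{\dim\mathcal S})$ with absolute constants.

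\textbf{Step 3 (full-space case).} Taking $\mathcal S=\mathbb R^d$ and $c=1$ in the same argument gives $\mathbb E|\langle u,\hat g\rangle|=m_d\le 1/\sqrt d$, which is $O(1/\sqrt d)$; in fact it is $\Theta(1/\sqrt d)$. This is consistent with the tail bound already used in Lemma~\ref{lem:near_orthogonal}.

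I expect the main obstacle to be definitional rather than technical. The statement leaves ``expected alignment'' and ``cosine of the projection'' informal. Signed alignment has expectation zero by symmetry, so the argument must commit to the absolute inner product, or alternatively to the root-mean-square version, which gives exactly $c/\sqrt k$ and $1/\sqrt d$. Once that choice is fixed, the only genuinely non-trivial step is the lower bound in Step 2, and the independence-of-radius trick handles it cleanly.
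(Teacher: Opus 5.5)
The paper gives no proof of this proposition. It is stated in the appendix and followed only by a one-sentence gloss, so there is no argument from the authors to compare against, and your proposal supplies the missing proof.

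Your proof is correct under the interpretations you fix: $c=\|P_{\mathcal S}\hat g\|$, $u$ uniform on the unit sphere of $\mathcal S$, and alignment $=\mathbb E|\langle u,\hat g\rangle|$. Three steps combine to give $\sqrt{2/(\pi k)}\le m_k\le 1/\sqrt k$:
\begin{itemize}
\item the reduction $\langle u,\hat g\rangle=c\langle u,v\rangle$;
\item the upper bound $m_k\le 1/\sqrt k$, obtained from $\sum_i\xi_i^2/\|\xi\|^2=1$;
\item the identity $m_k=\sqrt{2/\pi}/\mathbb E\|\xi\|$, obtained from the independence of $\|\xi\|$ and $\xi/\|\xi\|$.
\end{itemize}

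The nearest tool in the paper is the tail bound in the proof of Lemma~\ref{lem:near_orthogonal}. Integrating it gives $\mathbb E|\langle u,e_1\rangle|\le\int_0^\infty 2e^{-d\tau^2/2}\,d\tau=\sqrt{2\pi/d}$. That recovers the $O(1/\sqrt d)$ half with a worse constant. However, a tail upper bound can never give the lower bound that the $\Theta$ in the subspace claim requires. Your argument gives both sides with absolute constants and shows the full-space rate is itself tight. So the gain factor $\Theta(c\sqrt{d/\dim\mathcal S})$ is genuinely established, not just an upper bound compared against a lower bound.

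Two remarks on scope.

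First, normalizing the perturbation is forced, not merely one option among several. For an unnormalized $\epsilon=\sigma Q\xi$ you get $\mathbb E|\langle\epsilon,\hat g\rangle|=c\sigma\sqrt{2/\pi}$, with no dimension dependence. The full-space analogue $\sigma\sqrt{2/\pi}$ would even be larger. So the statement holds only when alignment is measured as a cosine, and you should say so explicitly.

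Second, both the proposition and your proof concern isotropic sampling in the linear span. The actual Hyper-ES variable lives in $\mathbb R^{L\times 2N}$ and is decoded nonlinearly: sigmoid and exp decoding, DARE masks, weighted TIES sign election, and layer-wise weights. So the reachable updates are neither confined to $\mathcal S$ nor sampled isotropically within it. This limits what the proposition says about the algorithm, not the validity of your argument.
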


This statement formalizes why the low-dimensional subspace alone is insufficient: it must also be task-aligned. GRPO-derived task vectors are intended to provide this alignment.

\begin{table}[H]
    \centering
    \caption{Pairwise cosine similarity between GRPO+LoRA and CMA-ES+LoRA task vectors on Qwen2.5-0.5B, each trained for 10 steps on GSM8K. The near-zero off-diagonal value supports the directional-mismatch evidence in Figure~\ref{fig:figure15}.}
    \label{tab:cosine_sim}
    \small
    \begin{tabular}{l c c}
        \toprule
        & \textbf{GRPO+LoRA} & \textbf{CMA-ES+LoRA} \\
        \midrule
        \textbf{GRPO+LoRA}      & $1.000$ & $8.59 \times 10^{-5}$ \\
        \textbf{CMA-ES+LoRA}    & $8.59 \times 10^{-5}$ & $1.000$ \\
        \bottomrule
    \end{tabular}
\end{table}

\begin{table}[H]
    \centering
    \caption{Total adapter $\ell_2$ norm $\|\Delta W\|_F$ summed over all expanded LoRA modules (scaling $\alpha/r=2.0$). CMA-ES+LoRA produces adapters significantly larger than GRPO+LoRA and \textsc{Hyper-ES}.}
    \label{tab:frobenius}
    \small
    \resizebox{0.5\textwidth}{!}{
    \begin{tabular}{l c c c}
        \toprule
        \textbf{Model} & \textbf{GRPO+LoRA} & \textbf{CMA-ES+LoRA} & \textbf{\textsc{Hyper-ES}} \\
        \midrule
        Qwen2.5-0.5B  & 0.0610 & 17.2486 & 0.3092 \\
        Qwen2.5-1.5B  & 0.1494 & 35.6538 & 0.4764 \\
        DeepSeek-1.5B & 0.0626 & 22.2137 & 0.2266 \\
        \bottomrule
    \end{tabular}}
\end{table}

\subsection{Additional Evidence for ES Drift}
\label{app:es_drift_evidence}

This appendix provides numerical diagnostics that correspond to the direct-ES failure modes illustrated in Figure~\ref{fig:figure15}. Table~\ref{tab:cosine_sim} shows that CMA-ES+LoRA moves in a direction almost orthogonal to the GRPO+LoRA update, while Table~\ref{tab:frobenius} shows that CMA-ES+LoRA also produces much larger adapter norms.

\newpage

\section{Detailed Methods}
\subsection{Detailed Merging Operator}
\label{app:merging_operator}

This appendix gives the full parameterization of the merging operator used in Section~\ref{sec:method}. For each group $\ell\in\{1,\ldots,L\}$, the search variable $\boldsymbol{z}_\ell=(\boldsymbol{a}_{\ell,1:N},\boldsymbol{b}_{\ell,1:N})$ contains a drop-rate logit and a mixing-weight logit for each direction $i\in\{1,\ldots,N\}$. They are decoded as
\begin{equation}
    p_i^{(\ell)}=\sigma(a_{\ell,i}),
    \qquad
    \omega_i^{(\ell)}=\exp(b_{\ell,i}).
    \label{eq:decode_app}
\end{equation}

\paragraph{LoRA task vectors.}
For a LoRA adapter with rank $r$ and scaling $\alpha$, the task vector is
\begin{equation}
    \Delta\theta_i^{(\ell)} = \frac{\alpha}{r} B_i^{(\ell)}A_i^{(\ell)}.
    \label{eq:lora_delta_app}
\end{equation}
These deltas are precomputed and cached before the search.

\paragraph{DARE sparsification.}
Given the decoded drop rate, we sample a Bernoulli mask and sparsify the task vector:
\begin{equation}
    \begin{aligned}
        \widetilde{\Delta\theta}_i^{(\ell)}
        &= \Delta\theta_i^{(\ell)}\odot M_i^{(\ell)}, \\
        M_i^{(\ell)}
        &\sim \mathrm{Bernoulli}(1-p_i^{(\ell)}).
    \end{aligned}
    \label{eq:dare_app_revised}
\end{equation}
Unlike canonical DARE, we omit the $1/(1-p)$ rescaling factor and instead control magnitude through a separate global scale.

\paragraph{TIES sign election and disjoint merge.}
For each scalar parameter $j$, the elected sign is
\begin{equation}
    \hat{s}_j^{(\ell)}
    = \mathrm{sign}\!\left(\sum_{i=1}^N \omega_i^{(\ell)}\widetilde{\Delta\theta}_{i,j}^{(\ell)}\right).
    \label{eq:sign_app}
\end{equation}
Only sign-agreeing entries contribute to the merged update:
\begin{equation}
    \overline{\Delta\theta}_{j}^{(\ell)} =
    \frac{\sum_i \omega_i^{(\ell)}\widetilde{\Delta\theta}_{i,j}^{(\ell)}\,\mathbf{1}[\mathrm{sgn}(\widetilde{\Delta\theta}_{i,j}^{(\ell)})=\hat{s}_j^{(\ell)}]}
    {\sum_i \omega_i^{(\ell)}\mathbf{1}[\mathrm{sgn}(\widetilde{\Delta\theta}_{i,j}^{(\ell)})=\hat{s}_j^{(\ell)}]}.
    \label{eq:disjoint_merge_app}
\end{equation}
The final update is scaled and applied as
\begin{equation}
    \theta_*^{(\ell)} = \theta_0^{(\ell)} + \gamma\overline{\Delta\theta}^{(\ell)},
    \qquad
    \gamma=\texttt{scale}\times N.
    \label{eq:apply_app}
\end{equation}

\subsection{Search Details}
\label{app:search_details}

\paragraph{Grid warm start.}
The warm-start stage restricts all groups to share the same drop rate and scale:
\begin{equation}
    p_i^{(\ell)}=p,\quad
    \omega_i^{(\ell)}=1,
    \quad
    \gamma=\texttt{scale}\times N,
    \qquad
    \forall i,\ell.
    \label{eq:grid_restriction_main}
\end{equation}
We sweep \texttt{scale} $\in\{1.0,1.1,\ldots,3.0\}$ and $p\in\{0.3,0.5,0.7,0.9\}$ across DARE/TIES flag combinations. In our experiments, the optimal global configurations are identified as $\texttt{scale}=2.7$ and $p=0.9$ for Qwen2.5-0.5B, $\texttt{scale}=2.6$ and $p=0.7$ for Qwen2.5-1.5B, and $\texttt{scale}=2.9$ and $p=0.5$ for DeepSeek-R1-Distill-Qwen-1.5B, all of which serve as the initialization for the subsequent CMA-ES stage.

\paragraph{CMA-ES.}
CMA-ES operates in the $L\times 2N$-dimensional merging space. At generation $t$, it samples $G$ candidates
\begin{equation}
    z_k^{(t)}
    \sim
    \mathcal{N}\!\left(
    m_z^{(t)},
    (\sigma_z^{(t)})^2 C_z^{(t)}
    \right),
    \quad k=1,\ldots,G.
    \label{eq:cma_z_sampling_main}
\end{equation}
Each candidate is decoded into a merged model and evaluated on the validation set:
\begin{equation}
    f_k^{(t)}
    =
    \mathcal{A}\!\left(\theta_*(z_k^{(t)})\right).
    \label{eq:cma_fitness_main}
\end{equation}
Before ranking candidates, we normalize fitness values within each generation:
\begin{equation}
    \widehat{f}_k^{(t)}
    =
    \frac{
        f_k^{(t)}-\mathrm{mean}_{j}(f_j^{(t)})
    }{
        \mathrm{std}_{j}(f_j^{(t)})+\varepsilon
    }.
    \label{eq:fitness_norm_main}
\end{equation}
The top-$\mu$ candidates under the normalized fitness update the mean:
\begin{equation}
    m_z^{(t+1)}
    =
    \sum_{k=1}^{\mu}
    c_k z_{k:G}^{(t)},
    \quad c_k>0,\quad \sum_{k=1}^{\mu}c_k=1.
    \label{eq:cma_z_update_main}
\end{equation}
We initialize $m_z^{(0)}$ from the best grid configuration, use $\sigma_z^{(0)}=0.3$, and set $C_z^{(0)}=I$. The default CMA-ES population size grows logarithmically with dimension,
\begin{equation}
    G = 4+\lfloor 3\ln(L\times 2N)\rfloor,
    \label{eq:cma_population_size}
\end{equation}
which is about $24$ in our main $L\times 2N=960$ search space. The final solution is taken from the converged distribution mean rather than the historical best candidate.

\clearpage

\subsection{Detailed Introduction to Evolution Strategies}
\label{app:es_details}

\subsubsection*{CMA-ES update rules}

CMA-ES~\citep{hansen2001cmaes} maintains a Gaussian search distribution
$\mathcal{N}(m^{(t)},(\sigma^{(t)})^2C^{(t)})$ and updates it through mean recombination, cumulative step-size adaptation, and covariance adaptation.

\paragraph{Mean update.}
After sorting $G$ offspring by fitness, the top-$\mu$ candidates define the next mean:
\begin{equation}
    m^{(t+1)} = \sum_{k=1}^{\mu} c_k x_{k:G}^{(t)},
    \qquad
    c_k>0,\quad \sum_{k=1}^{\mu}c_k=1.
    \label{eq:cma_mean_app}
\end{equation}

\paragraph{Step-size adaptation.}
The global step size is adapted with a cumulative evolution path:

\begin{align}
    p_\sigma^{(t+1)}
    &=
    (1-c_\sigma)p_\sigma^{(t)}
    \notag\\
    &\quad
    +\sqrt{c_\sigma(2-c_\sigma)\mu_w}\,
    (C^{(t)})^{-1/2}
    \notag\\
    &\quad
    \times
    \frac{m^{(t+1)}-m^{(t)}}{\sigma^{(t)}},
    \label{eq:cma_path_app}
    \\
    \sigma^{(t+1)}
    &=
    \sigma^{(t)}
    \exp\!\left(
    \frac{c_\sigma}{d_\sigma}
    \left(
    \frac{\|p_\sigma^{(t+1)}\|}{\chi_n}
    -1
    \right)
    \right),
    \label{eq:cma_sigma_app}
\end{align}

where $\chi_n=\mathbb{E}\|\mathcal{N}(0,I_n)\|$ is the expected norm of a standard Gaussian in $n$ dimensions.

\paragraph{Covariance update.}
The covariance matrix is updated with rank-one and rank-$\mu$ terms:
\begin{equation}
    \begin{aligned}
        C^{(t+1)}
        &=
        (1-c_1-c_\mu)C^{(t)}
        +c_1 p_c^{(t+1)}(p_c^{(t+1)})^\top  \\
        &\quad
        +c_\mu\sum_{k=1}^{\mu} c_k
        y_{k:G}^{(t)}(y_{k:G}^{(t)})^\top,
    \end{aligned}
    \label{eq:cma_cov_app}
\end{equation}
where $y_{k:G}^{(t)}=(x_{k:G}^{(t)}-m^{(t)})/\sigma^{(t)}$ and $p_c^{(t)}$ is the covariance evolution path. These updates explain why CMA-ES benefits from a compact search space: covariance estimation is meaningful only when the population provides enough signal relative to the dimension.


\subsection{Detailed Introduction to Model Merging}
\label{app:model_merging_details}

\paragraph{Task arithmetic.}
Task arithmetic~\citep{ilharco2022editing} computes task vectors $\Delta W_i=W_i-W_0$ and combines fine-tuned models as
\begin{equation}
    W^* = W_0+\sum_i \lambda_i \Delta W_i.
\end{equation}
It is simple and efficient, but it can suffer from interference when task vectors have conflicting parameter signs.

\paragraph{DARE.}
DARE~\citep{yu2024dare} reduces interference by randomly sparsifying task vectors:
\begin{equation}
    \begin{aligned}
        \widetilde{\Delta W}_i
        &=
        \frac{1}{1-p}\Delta W_i\odot m_i, \\
        m_i
        &\sim \mathrm{Bernoulli}(1-p).
    \end{aligned}
\end{equation}
The canonical $1/(1-p)$ factor preserves expected magnitude. \textsc{Hyper-ES} omits this fixed factor in the main merging operator and instead controls magnitude with the validation-selected global scale $\gamma$.

\paragraph{TIES-Merging.}
TIES-Merging~\citep{yadav2023tiesmerging} mitigates sign conflicts by electing an aggregate sign and averaging only sign-consistent entries. \textsc{Hyper-ES} applies this sign election after DARE sparsification and makes the election weight-aware through the CMA-ES-decoded mixing weights.


\newpage
\section{Detailed Experiment}

\subsection{Training Details}
\label{app:training_details}

\begin{table*}[t]
    \centering
    \renewcommand\arraystretch{0.9}
    \caption{GRPO training hyperparameters.}
    \small{\begin{tabularx}{0.82\textwidth}{l@{\hskip 0.8in}X}
            \toprule
            Parameter & Value \\
            \midrule
            \multicolumn{2}{c}{\textsc{Qwen2.5-0.5B/1.5B}} \\
            \midrule
            Training rollout max response length & 2048 tokens \\
            Rollout temperature           & 0.6 \\
            Max rollout sequences         & 512 \\
            PPO mini-batch size           & 256 \\
            PPO micro-batch per GPU       & 4 \\
            Group size $G$                & 8 \\
            Learning rate                 & $1\times10^{-6}$ \\
            KL coefficient $\beta$        & $1\times10^{-3}$ \\
            GPU memory fraction           & 0.30 \\
            Rollout backend               & vLLM \\
            \midrule
            \multicolumn{2}{c}{\textsc{DeepSeek-R1-Distill-Qwen-1.5B}} \\
            \midrule
            Training rollout max response length & 6144 tokens \\
            PPO micro-batch per GPU       & 1 \\
            Gradient checkpointing        & enabled \\
            Log-prob micro-batch          & 2 \\
            Other settings                & Same as above \\
            \bottomrule
    \end{tabularx}}
    \label{tab:grpo-config}
\end{table*}

\begin{table*}[htbp]
    \centering
    \renewcommand\arraystretch{0.9}
    \caption{LoRA adapter configuration used across all models.}
    \small{\begin{tabularx}{0.82\textwidth}{l@{\hskip 0.8in}X}
            \toprule
            Parameter & Value \\
            \midrule
            Rank $r$        & 32 \\
            Scaling $\alpha$ & 64 ($\alpha/r=2.0$) \\
            Target modules  & \texttt{q\_proj, k\_proj, v\_proj, o\_proj, gate\_proj, up\_proj, down\_proj} \\
            Storage dtype   & float16 \\
            \bottomrule
    \end{tabularx}}
    \label{tab:lora-config}
\end{table*}

\begin{table*}[htbp]
\centering
\small
\renewcommand{\arraystretch}{1.1}
\caption{Time efficiency breakdown of \textsc{Hyper-ES} across different stages. Execution times are measured primarily on 8 NVIDIA GeForce RTX 3090 (24GB) GPUs.}
\label{tab:time_efficiency}
\begin{tabular}{l r r}
    \toprule
    \textbf{Stage} & \textbf{Time (s)} & \textbf{Duration} \\
    \midrule
    \multicolumn{3}{l}{\textit{Baseline}} \\
    GRPO+LoRA (78 steps) & 15,314 & 4.25 h \\
    \midrule
    \multicolumn{3}{l}{\textit{Hyper-ES (Ours)}} \\
    Few-shot GRPO directions (Cumulative) & 16,747 & 4.65 h \\
    Few-shot GRPO directions (Parallelized) & 1,769 & 29.5 min \\
    Grid search (84 evals, DARE+TIES, scale 1--3) & 3,106 & 51.8 min \\
    CMA-ES (50 gens, extrapolated) & 6,750 & 1.88 h \\
    \midrule
    \textbf{Total (Ours, Cumulative)} & \textbf{26,603} & \textbf{7.39 h} \\
    \textbf{Total (Ours, Parallelized)} & \textbf{11,625} & \textbf{3.23 h} \\
    \bottomrule
\end{tabular}
\end{table*}

\subsubsection{GRPO Hyperparameters}
\label{app:training_grpo}
The detailed hyperparameter configurations used during the GRPO training phase across different model backbones are summarized in Table~\ref{tab:grpo-config}.

\subsubsection{LoRA Configuration}
\label{app:training_lora}
The specific structural parameters and training configurations applied to the LoRA adapters across all evaluated backbones are detailed in Table~\ref{tab:lora-config}.

\subsection{Time Efficiency}
\label{sec:time_efficiency}

Experiments in this paper are mainly conducted on 8 Nvidia 3090 24GB GPUs, and we also involve A100 and H100 GPUs for larger settings. Table~\ref{tab:time_efficiency} provides a detailed breakdown of the execution time consumed during each phase of \textsc{Hyper-ES} compared to the GRPO-LoRA baseline.

While maintaining an equivalent memory footprint to GRPO-LoRA, \textsc{Hyper-ES} significantly reduces the wall-clock execution time by bypassing the heavy communication overhead typical of distributed reinforcement learning. Taking our Qwen2.5-0.5B-Instruct experiment with 10 directions as a representative case, because the training of multiple few-shot LoRAs is entirely independent, they can be launched concurrently in an embarrassingly parallel fashion. This compresses the direction-generation phase to just $1,769$\,s ($29.5$\,min) for the longest single run. Combined with the subsequent decoupled grid search and low-dimensional CMA-ES, the end-to-end wall-clock time is only $3.23$\,h. This represents an approximate $24\%$ reduction in total training time compared to standard GRPO+LoRA ($4.25$\,h), demonstrating both the parallelizability and the overall efficiency of our framework.

\subsection{Performances over Stages}
\label{app:full_eval}

Table~\ref{tab:full_eval} reports validation and GSM8K test accuracy for individual difficulty-level adapters. Individual LoRA adapters each train on a single difficulty bucket; their results show that the benefit of \textsc{Hyper-ES} comes from merging the direction pool rather than selecting one direction.

\begin{table*}[t]
    \centering
    \footnotesize
    \setlength{\tabcolsep}{6pt}
    \begin{tabular}{lcccc}
        \toprule
        & \multicolumn{2}{c}{\textbf{Qwen2.5-0.5B}}
        & \multicolumn{2}{c}{\textbf{Qwen2.5-1.5B}} \\
        \cmidrule(lr){2-3} \cmidrule(lr){4-5}
        \textbf{Method}
        & \multicolumn{2}{c}{\textbf{In-domain GSM8K}}
        & \multicolumn{2}{c}{\textbf{In-domain GSM8K}} \\
        \cmidrule(lr){2-3} \cmidrule(lr){4-5}
        & \textbf{Val.} & \textbf{Test}
        & \textbf{Val.} & \textbf{Test} \\
        \midrule
        Base (no fine-tuning)   & 45.83 & 46.93 & 70.83 & 71.49 \\
        \midrule
        LoRA step 1             & 47.67 & 48.22 & 71.50 & 72.02 \\
        LoRA step 2             & 50.33 & 48.90 & 71.17 & 71.57 \\
        LoRA step 3             & 50.50 & 48.07 & 71.33 & 71.49 \\
        LoRA step 4             & 49.17 & 48.22 & 71.67 & 71.65 \\
        LoRA step 5             & 48.33 & 47.76 & 71.67 & 71.72 \\
        LoRA step 6             & 47.67 & 48.22 & 70.67 & 71.95 \\
        LoRA step 7             & 47.33 & 48.82 & 71.00 & 71.72 \\
        LoRA step 8             & 48.50 & 48.07 & 71.50 & 72.02 \\
        LoRA step 9             & 47.50 & 47.76 & 71.00 & 71.87 \\
        LoRA step 10+           & 49.17 & 47.76 & 71.17 & 71.42 \\
        \midrule
        GRPO+LoRA             & 58.17 & 50.57 & 76.67 & 74.22 \\
        CMA-ES+LoRA           & 53.83 & 45.94 & 76.00 & 73.54 \\
        \midrule
        \textsc{Hyper-ES}{} (grid only)       & 60.17 & 50.72 & 78.17 & 75.21 \\
        \textbf{\textsc{Hyper-ES}{} (CMA-ES)}
        & \textbf{63.83} & \textbf{51.18}
        & \textbf{79.67} & \textbf{75.51} \\
        \bottomrule
    \end{tabular}
    \caption{Complete per-method GSM8K results. Val. and Test denote held-out validation and full GSM8K test accuracy (\%) under greedy decoding ($T=0$).}
    \label{tab:full_eval}
\end{table*}

\subsection{Additional Results}
\label{app:additional_results}
\subsubsection{Multi-seed Results}
\label{app:multiseed}

We conduct three independent runs on Qwen2.5-0.5B-Instruct by using different random seeds for dataset splitting, short GRPO training, direction-pool construction, and CMA-ES search. Therefore, a new direction pool is constructed in each run.

\begin{table*}[t]
    \centering
    \small
    \caption{Multi-seed results on Qwen2.5-0.5B-Instruct. We report mean $\pm$ standard deviation over three independent runs.}
    \label{tab:multiseed}
    \begin{tabular}{lccccc}
        \toprule
        Method & GSM8K & SVAMP & MultiArith & GSM-Hard & Avg. \\
        \midrule
        GRPO+LoRA & $50.92\pm0.68$ & $60.89\pm0.38$ & $94.82\pm0.85$ & $17.51\pm0.20$ & $56.04\pm0.29$ \\
        \textsc{Hyper-ES} & $50.85\pm0.39$ & $\mathbf{63.67\pm1.45}$ & $93.70\pm0.32$ & $\mathbf{18.27\pm0.33}$ & $\mathbf{56.62\pm0.44}$ \\
        \bottomrule
    \end{tabular}
\end{table*}

Compared with GRPO+LoRA, \textsc{Hyper-ES} improves the average score from $56.04\pm0.29$ to $56.62\pm0.44$. The modest standard deviation across three independent runs, further demonstrates the stability of \textsc{Hyper-ES} across different direction-pool constructions.

\subsubsection{Additional Baselines}
\label{app:additional_baselines}

We further compare \textsc{Hyper-ES} with tinyGRPO~\citep{openthought2025tinygrpo}, CABS~\citep{yang2025cabs}, and WUDI-Merging~\citep{cheng2025whoever}. We apply the model-merging baselines to the same direction pool used by \textsc{Hyper-ES}.

\begin{table*}[t]
    \centering
    \small
    \caption{Comparison with additional baselines on Qwen2.5-0.5B-Instruct.}
    \label{tab:additional_baselines}
    \begin{tabular}{lccccc}
        \toprule
        Method & GSM8K & SVAMP & MultiArith & GSM-Hard & Avg. \\
        \midrule
        tinyGRPO~\citep{openthought2025tinygrpo} & 47.99 & 53.67 & 82.78 & 16.38 & 50.21 \\
        CABS~\citep{yang2025cabs} & 47.99 & 58.67 & 90.56 & 16.53 & 53.44 \\
        WUDI-Merging~\citep{cheng2025whoever} & 51.10 & 61.67 & 92.22 & 17.29 & 55.57 \\
        \textsc{Hyper-ES} & \textbf{51.18} & \textbf{65.33} & \textbf{93.89} & \textbf{18.12} & \textbf{57.13} \\
        \bottomrule
    \end{tabular}
\end{table*}

\textsc{Hyper-ES} achieves the best result on all four benchmarks. It improves the average score by 6.92 points over tinyGRPO and by 1.56 points over the strongest model-merging baseline.

\subsubsection{Out-of-domain Code Generation}
\label{app:code_results}

To evaluate the generalizability of \textsc{Hyper-ES} to code-generation tasks, we conduct experiments on MBPP~\citep{austin2021program} and HumanEval~\citep{chen2021evaluating}. The evaluation covers the MBPP test set with 500 problems and the HumanEval test set with 164 problems, using greedy decoding with $T=0$ and a maximum output length of 1,024 tokens.

\begin{table*}[t]
    \centering
    \small
    \caption{Out-of-domain generalization results on code-generation tasks.}
    \label{tab:code_results}
    \begin{tabular}{lccc}
        \toprule
        Method & MBPP & HumanEval & Avg. \\
        \midrule
        Qwen2.5-0.5B + GRPO+LoRA & \textbf{36.40} & 31.71 & 34.06 \\
        Qwen2.5-0.5B + \textsc{Hyper-ES} & 34.00 & \textbf{34.76} & \textbf{34.38} \\
        Qwen2.5-1.5B + GRPO+LoRA & 47.60 & 29.27 & 38.44 \\
        Qwen2.5-1.5B + \textsc{Hyper-ES} & \textbf{49.60} & \textbf{43.29} & \textbf{46.45} \\
        \bottomrule
    \end{tabular}
\end{table*}

\textsc{Hyper-ES} achieves higher average code-generation performance on both models. On Qwen2.5-1.5B, the average score increases from 38.44 to 46.45. These results demonstrate that \textsc{Hyper-ES} generalizes effectively to code-generation tasks.

\section{Baselines \& Datasets \& License} \label{app:baselines_licenses}

\subsection{Baselines}

Based on \textbf{Qwen2.5-0.5B/1.5B-Instruct}~\citep{qwen2025qwen25} and \textbf{DeepSeek-R1-Distill-Qwen-1.5B}~\citep{guo2025deepseek}, we include a fine-tuning baseline (GRPO+LoRA), direct CMA-ES+LoRA, model-merging baselines such as Average Merge~\citep{ilharco2022editing} and DARE+TIES~\citep{yu2024dare,yadav2023tiesmerging}, and \textsc{Hyper-ES}.

\paragraph{Fine-tuning Baseline} GRPO+LoRA trains a single LoRA adapter on the full training corpus for the same total number of gradient steps as the combined direction-generation budget. We implement it using VERL. The training prompt is given in Appendix~\ref{app:prompts}. This baseline controls for the total GRPO compute invested by \textsc{Hyper-ES} and isolates the effect of the evolutionary merging stage.

\paragraph{Model Merging Baselines} Average Merge uniformly averages all $N$ LoRA task vectors. DARE+TIES applies random sparsification~\citep{yu2024dare} followed by sign-conflict resolution~\citep{yadav2023tiesmerging} at fixed drop rates. These baselines share the same direction pool as \textsc{Hyper-ES} but use fixed or grid-searched hyperparameters rather than CMA-ES-optimized per-layer coefficients.

\subsection{Datasets}

We adopt GSM8K~\citep{cobbe2021gsm8k} for in-domain evaluation, and GSM-Hard~\citep{gao2023pal}, SVAMP~\citep{patel2021nlp}, and MultiArith~\citep{roy2015solving} for out-of-domain evaluation on Qwen2.5 models. For DeepSeek-R1-Distill-Qwen-1.5B, we additionally report MATH-500~\citep{lightman2024verify}, AMC23. For code-generation evaluation, we use the MBPP test set~\citep{austin2021program} with 500 problems and HumanEval~\citep{chen2021evaluating} with 164 problems.

\subsection{License}
\label{app:license}

For base LLMs, datasets, and frameworks, we list their licenses in Table~\ref{tab:licenses}.

\begin{table*}[t]
    \centering
    \setlength{\tabcolsep}{1.1mm}
    \renewcommand\arraystretch{1.2}
    \caption{A summary of licenses.}
    \resizebox{\textwidth}{!}{
        \begin{tabular}{llll}
            \toprule
            Resource & Type & License & URL \\ \midrule
            Qwen2.5-0.5B-Instruct & Base LLM & Qwen License & \url{https://huggingface.co/Qwen/Qwen2.5-0.5B-Instruct} \\
            Qwen2.5-1.5B-Instruct & Base LLM & Qwen License & \url{https://huggingface.co/Qwen/Qwen2.5-1.5B-Instruct} \\
            DeepSeek-R1-Distill-Qwen-1.5B & Base LLM & MIT License & \url{https://huggingface.co/deepseek-ai/DeepSeek-R1-Distill-Qwen-1.5B} \\ \midrule
            VERL & Training framework & Apache-2.0 & \url{https://github.com/volcengine/verl} \\
            vLLM & Inference engine & Apache-2.0 & \url{https://github.com/vllm-project/vllm} \\ \midrule
            GSM8K, GSM-Hard, SVAMP & Dataset & MIT License & \url{https://github.com/openai/grade-school-math} \\
            MultiArith & Dataset & Available Online & \url{https://github.com/sroy9/equationparsing} \\
            MATH-500 & Dataset & MIT License & \url{https://github.com/openai/prm800k} \\
            GSM8K-AUG & Dataset & Available Online & \url{https://huggingface.co/datasets/whynlp/gsm8k-aug} \\
            DeepScaleR-Dataset & Dataset & MIT License & \url{https://huggingface.co/datasets/agentica-org/DeepScaleR-Preview-Dataset} \\ 
            MBPP & Dataset & CC BY 4.0 & \url{https://github.com/google-research/google-research/tree/master/mbpp} \\
            HumanEval & Dataset & MIT & \url{https://github.com/openai/human-eval} \\ \bottomrule
    \end{tabular}}
    \label{tab:licenses}
\end{table*}

\subsection{LLM Usage}

This paper employs Claude Code for experiments and writing, but we are responsible for the contents.


\end{document}